\documentclass{svjour3}
\journalname{Machine Learning}
\usepackage{natbib}
\usepackage{tikz}
\usepackage{latexsym}
\usepackage{amssymb}
\usepackage{amsmath}
\usepackage{subcaption}
\usepackage{tcolorbox}
\usepackage{appendix}
\usetikzlibrary{shapes,trees,arrows}
\usetikzlibrary{calc,through}
\usetikzlibrary{automata}
\usetikzlibrary{positioning}

\newtheorem{mydefinition}{Definition}
\newtheorem{myexample}{Example}
\newtheorem{myremark}{Remark}
\newtheorem{mylemma}{Lemma}

\newtheorem{myproposition}[mylemma]{Proposition}
\def\qedsymbol{$\blacksquare$}
\def\qed {{
\parfillskip=0pt        
\widowpenalty=10000     
\displaywidowpenalty=10000  
\finalhyphendemerits=0  
%
\leavevmode             
\unskip                 
\nobreak                
\hfil                   
\penalty50              
\hskip.2em              
\null                   
\hfill                  
\qedsymbol
%
\par}} 
\newcommand{\PEX}{{\mathtt{PEX}}}
\newcommand{\PXP}{{\mathtt{PXP}}}
\newcommand{\PXPk}{{\mathtt{PXP(k)}}}
\newcommand{\Ratify}{{\mathit{RATIFY}}}
\newcommand{\Revise}{{\mathit{REVISE}}}
\newcommand{\Refute}{{\mathit{REFUTE}}}
\newcommand{\Reject}{{\mathit{REJECT}}}
\newcommand{\Term}{{\mathit{TERM}}}
\newcommand{\Init}{{\mathit{INIT}}}
\newcommand{\Match}{{\mathtt{MATCH}}}
\newcommand{\Predict}{{\mathtt{PREDICT}}}
\newcommand{\Agree}{{\mathtt{AGREE}}}
\newcommand{\Learn}{{\mathtt{LEARN}}}

\bibliographystyle{spbasic}
\begin{document}
\title{A Model for Intelligible Interaction Between Agents That Predict and Explain}
\author{A. Baskar \and 
        Ashwin Srinivasan \and 
        Michael Bain \and
        Enrico Coiera
}
\authorrunning{Baskar, Srinivasan, Bain, Coiera} 
\institute{A. Baskar \at 
				Dept. of Computer Science \& Information Systems\\
				BITS Pilani, Goa Campus, Goa. 					\email{abaskar@goa.bits-pilani.ac.in} \and
    Ashwin Srinivasan \at 
				Dept. of Computer Science \& Information Systems and APPCAIR \\
            BITS Pilani, Goa Campus, Goa.
						\email{ashwin@goa.bits-pilani.ac.in}
            \and
            Michael Bain \at
              School of Computer Science and Engineering \\
							University of New South Wales, Sydney.
							\email{m.bain@unsw.edu.au}
		    \and
		     Enrico Coiera \at
  Centre for Health Informatics, Macquarie University\\
 Sydney, Australia. 
 \email{enrico.coiera@mq.edu.au} 
		    }

\maketitle
\begin{abstract}%
Machine Learning (ML) has emerged as a powerful form of data modelling with 
widespread applicability beyond its roots in the design of autonomous agents.
However, relatively little attention has been paid to the interaction between
people and ML systems. In this paper we view interaction between humans and
ML systems within the broader context of communication between agents capable
of prediction and explanation. We formalise the interaction model by taking 
agents to be automata with some special characteristics and define a protocol
for communication between such agents. We define One- and Two-Way Intelligibility
as properties that emerge at run-time by execution of the protocol. 
The formalisation allows us to identify conditions under which run-time sequences
are bounded, and identify conditions under which the protocol can correctly implement
an axiomatic specification of intelligible interaction between a human and an ML system.
We also demonstrate using the formal model to: (a) identify instances
of One- and Two-Way Intelligibility in literature reports on humans interacting
with ML systems providing logic-based explanations, as is done in
Inductive Logic Programming (ILP); and (b) map interactions
between humans and machines in an elaborate natural-language based dialogue-model to
One- or Two-Way Intelligible interactions in the formal model.             
\end{abstract}
\keywords 
{Human-Centred ML, Intelligible Interaction, Formal Model, Protocol}
\section{Introduction}

The need for predictions made by machine-constructed models to be intelligible to a human has
been evident for at least four decades. To the best of our knowledge, the earliest identification
of a possible mismatch in the representations used by humans and machines was by Michie in~\citep{michie:window82}. 
He describes the notion of a `Human Window' of comprehension based on
constraints on computation and storage constraints imposed by the biology of the brain. 
Consequences of machine-constructed assistance falling outside this human window are examined
on synthetic problems (chess endgames) in~\citep{kopec:thesis}, who also describe some real-life 
disasters arising from the use of machine-constructed assistance for humans operating in safety-critical
areas (the Three Mile Island reactor meltdown being one such). 
Assuming the existence of the human window, Michie went on to propose a classification of machine-learning
(ML) systems into three categories~\citep{michie:ewsl88}.
Weak ML systems are concerned only with improving performance, given
sample data. Strong ML systems improve performance, but are also required to communicate what it
has learned in some human-comprehensible form (Michie assumes this will be symbolic). 
Ultra-strong ML systems are Strong ML systems that can also teach the human to improve his or her
performance. 
This categorisation has recently informed a similar 3-way categorisation
for the use of AI tools in scientific discovery~\citep{krenn:nature2022}, and
to evaluate an Inductive Logic Programming (ILP) as a form of Ultra-Strong
Machine Learning~\citep{mugg:expl}.

The following aspects of Michie's characterisation are worth emphasising.
Firstly, it is clearly intended  for use in a human-in-the-loop
setting, though the human can be a teacher, student or collaborator. Secondly, the characterisation
is about intelligibility, not intelligence. Intelligibility as stated is a relation
between the ML system (the sender), what the ML system communicates (the message), and the human
(the receiver). Thus, the ML system can employ any representation for its internal model; all that
is needed is that it can communicate the ``how'' and ``why'' in a form that lies within the human
window of comprehension.\footnote{The adjective `weak' in the first category does not mean the
ML engine's performance is poor. It simply indicates that the constraints on the learner mean that it is not
required to communicate its update to the human.} Thirdly, it appears to be a classification of an ML system based on 
one-way communication from the machine to the human. It is not apparent what happens
in situations where the communication is from the human to the machine (this may well occur
in collaborative scientific discovery, for example). Symmetry would suggest the existence
of a `Machine Window' and associated requirements of the machine receiving comprehensible messages, but this
is not considered in \citep{michie:ewsl88}\footnote{Although Michie did refer to his approach as, in some sense,
inverting John McCarthy's dictum that ``In order for a program to be capable of learning something it must first be capable of being told it''~\citep{McCa:p:1959}.}. Finally, nothing is proposed by way of
a quantitative or qualitative assessment for one-way intelligibility of the
machine's communication (this is addressed by \citep{mugg:expl}, who propose
a quantitative measure of how beneficial the machine's explanation was to the human).

In this paper, we describe an interaction model between agents that can make
predictions and provide explanations for their predictions. Our focus is not on
developing any specific technique or representation for predictions and
explanations by agents, but
to identify whether the predictions and explanations provided by any one agent
is intelligible to the other. We attempt to do this
by examining the communication between the agents. Specifically:

\begin{enumerate}
\item We propose a communication protocol based on transition systems for
    modelling interactions between agents capable of
    constructing models for data and using these to exchange
    `what' (predict) and `why' (explain) information about data.
    We call such agents {$\PEX$} agents);
\item Based on this protocol we provide definitions
    for One- and Two-Way Intelligibility. 
    When applied to a human interacting with an ML system,
    we identify sufficient conditions
    to ensure that a derivation of One-Way Intelligibility
    using the protocol ensures correctness with the
    derivation of human- or machine-intelligibility using
    a set of `intelligibility axioms'. 
    We also identify conditions under which the
    protocol is complete with respect to recent
    work on viewing explainable AI, or XAI, as a
    property of execution of an extensive argumentation-based
    dialogue model from the literature \cite{madumal}; and
\item We provide case-studies of One- and Two-Way Intelligibility
    from reports in the literature between human and ML systems, in which one
    or both agents employ explanations in symbolic logic, including
    studies from Inductive Logic Programming (ILP). This was the
    original proposal by Michie for Strong and Ultra-Strong machine
    learning. The results there suggest that simply adopting
    logic-based explanations may not be sufficient for One-Way Intelligibility,
    which is consistent with the identification of `harmful' explanations
    in \citep{mugg:expl}.
\end{enumerate}

\section{An Axiomatic Specification of Human-Machine Intelligibility}
\label{sec:axioms}

We motivate the development of a general interaction model between agents capable of
prediction and explanation by looking first at possible criteria for inferring
One-Way Intelligibility of human-machine
interaction. These criteria will then inform the design of a communication protocol
for intelligible interaction in the more general setting.

\begin{myexample}
Consider a research study reported in \citep{covid} on the identification of
Covid-19 patients, based on X-ray images. The automated tool described in
the study uses a hierarchical design in which clinically relevant features
are extracted from X-ray images using state-of-the-art deep neural networks.
Deep neural networks are used to extract features (like ground-glass opacity) from the X-rays, and the system
 also includes a deep network for prediction of possible disease (like pneumonia).
The outputs from the deep networks are used by a symbolic decision-tree learner to
arrive at a prediction about Covid-19. 
Explanations are textual descriptions obtained from the path followed by the decision-tree
when classifying an example.
\begin{figure}[h!]
\begin{center}
    {\scriptsize{\em
    \begin{tabular}{lll} \\
    X-ray Not Covid because: & \hspace*{0.2cm} &  The explanation does not mention \\
    \hspace*{0.1cm} Air-space opacification probability is low;  and & & the right upper lobe air space \\
    \hspace*{0.1cm} Cardiomegaly probability is high; and & & opacification consistent
            with Covid. \\
    \hspace{0.1cm} Emphysema probability is low; and & & \\
    \hspace*{0.1cm} Pneumothorax probability is low; and  \\
    \hspace*{0.1cm} Fibrosis probability is low. \\[8pt]
    {\bf {Machine's explanation}} & & {\bf {Radiologist's feedback}}
    \end{tabular}\\
    }}
    \end{center}
    \caption{
        The machine's explanation for the classification of an X-ray image
        and a senior radiologist's feedback.}        
    \label{fig:covidx}
\end{figure}

\noindent
Results reported in \citep{covid} describe how this neural-symbolic approach compares
to an end-to-end monolithic neural approach (the predictive results of the two are comparable). 
However, our interest here is on the clinical assessment of the explanations produced
by the symbolic model by radiologists.
Figure \ref{fig:covidx} shows an example of a machine's explanation and a clinician's assessment
of that explanation. 
\end{myexample}
\noindent
Later (in Section \ref{sec:reappraise}) we return to this problem and provide a tabulation of assessments
on several ``test'' images. For the present we note that in the study, the human either confirms, refutes, or simply ignores a machine's prediction and explanation. Of these, only the first two
actions could be taken as indicative of intelligibility (although, as we will see later, even
this is not necessarily the case). Here we attempt to capture this
using the following six axioms that are concerned with communication of information:\footnote{
We restrict information to be in the  form of a prediction
accompanied by an explanation. For simplicity, we refer to
both as an explanation in the axioms, and we disentangle these
later in the paper. The constituents of explanations
are left open: the axioms only identify conditions when these
constituents are intelligible to the recipient.}

\noindent
{\bf Human-to-Machine.} Axioms in this
        category are concerned with
        machine-intelligibility
        of the information provided
        by a human to the machine.
       
        \begin{enumerate}
            \item[1.] Machine-Confirmation:
                    If the machine ratifies
                    a human-explanation
                    then 
                    the human's explanation is
                    intelligible to the machine.
             \item[2.] Machine-Refutability:
                    If the machine refutes
                    a human-explanation
                    then 
                    the human's explanation is
                    intelligible to the machine.
          \item[3.] Machine-Performance:
                 If the human-explanation
                    improves machine-performance
                    then the human's
                    explanation is
                    intelligible to the machine.
        \end{enumerate}

\noindent
{\bf Machine-to-Human.} This concerns the human-intelligibility of
        explanations provided by a machine:
        \begin{enumerate}
               \item[4.] Human-Confirmation:
                    If the human ratifies
                    a machine-explanation
                    then 
                    the machine's explanation is
                    intelligible to the human.
             \item[5.] Human-Refutability:
                    If the human refutes
                    a machine-explanation
                    then 
                    the machine's explanation is
                    intelligible to the human.
          \item[6.] Human-Performance:
                 If the machine-explanation
                    improves the human's performance
                    then the machine's explanation is
                    intelligible to the human.
        \end{enumerate}

\noindent
For the example in Figure \ref{fig:covidx}, the condition for the Human-Refutability
axiom will hold. And the machine's explanation will be considered as intelligible for 
the Human. We will look at more examples in detail in Section \ref{sec:reappraise}.
 At this point, the following clarifications may be helpful:
 
 \begin{itemize}
    \item The axioms are not intended to be a complete specification of machine- or
            human-intelligibility. Thus it is possible, for
            example, that none of the conditions for the machine-to-human axioms
            hold, and the machine's explanation may still be human-intelligible;
            The axioms also do not specify what, if anything, should be done if one
            or more of them hold. For example, if a machine's explanation is refuted,
            then what should the machine do about it?  This is normal since the axioms 
            are specifications of intelligibility and not of actions to be done.
    \item Two aspects of the axioms that might escape attention are: (a) Although individually,
        the axioms result in an inference of One-Way Intelligibility, taken together 
        they allow an inference of Two-Way Intelligibility; and (b) 
        The inference of intelligibility will depend on
        the specific human and machine involved in the interaction.
\end{itemize}

 We can at best take the axioms to be a partial specification for
 intelligibility within the context of an interaction model. Below,
 we describe a general model of interaction
 between agents.  We will return in Section \ref{sec:protprop_lxp} to
 the relationship between the intelligibility defined in interaction
 model and the axioms here.
 
 \section{Modelling Interaction between Agents that Predict and Explain}

We describe an interaction model for the more general
setting. For clarity, the description will be semi-formal: a detailed
formal treatment is in Appendix \ref{app:lxp}. 

\subsection{{$\PEX$} Agents and $\PEX$ Automata}

We consider interaction between
agents that have capabilities for learning (induction)
and explanation (justification).\footnote{The capacity for inference (deduction)
is taken for granted.} We will call such agents {$\PEX$} agents
(short for \underline{L}earn-and-\underline{Ex}plain).
Specifically, we assume that the interaction between {$\PEX$} agents 
will be modelled by communicating finite-state automata, which we will
call {$\PEX$} automata. A detailed specification of {$\PEX$} automata is
in Appendix \ref{app:lxp}. For the present, As normal, we will assume
that during run-time, the automaton at any instant is fully specified
by its {\em configuration\/}. Specifically, we will assume that
the configuration of a $\PEX$ automaton associate with agent $a_m$ includes:
a hypothesis $H_m$; and
a dataset $D_m$ consisting of 4-tuples $\{(x_i,y_i,e_i,p_i)\}_{i=1}^N$, where $x_i$ is a
data-instance, $y_i$ is a prediction given $x_i$; $e_i$ is an
explanation for $y_i$; and $p_i$ represents
the provenance for the prediction and the explanation
(that is, details about the origin of $y_i,e_i$ for an $x_i$: a simple
example is the automaton that sent the prediction and explanation). 
Additionally, the $\PEX$ agent $a_m$
has access to the following functions:

\begin{enumerate}
    \item[(a)] ${\texttt{PREDICT}}_m$ that returns the prediction of a 
        data-instance $x$ using its hypothesis;
 \item[(b)] $\texttt{EXPLAIN}_m$ that returns an explanation  for a data-instance $x$ using
        its hypothesis;
\item[(c)] $\texttt{LEARN}_m$ that learns a possibly new hypothesis given its existing
        hypothesis, dataset, and a possibly new data-triple; 
\item[(d)] $\texttt{MATCH}_m$ which is true if a pair of predictions $y, y^\prime$ match; and
\item[(e)] $\texttt{AGREE}_m$ that is true if a pair of explanations $e, e^\prime$
agree with each other.
\end{enumerate}

\noindent
We use the term $\PEX$-functions for the functions (a)--(e) above.
$\PEX$ agents are correctly $\PEX$-function, $\PEX$ automata pairs.
In the rest of the paper, it is understood
that $\PEX$-functions are agent-specific, and we will drop the subscript on
the functions unless required for emphasis.  Also
the ${\PEX}$ automaton defined in Appendix \ref{app:lxp} use the
agent-specific $\PEX$ functions
to define guarded transition relations, and we will use the term
``$\PEX$ automaton'' interchangeably with the corresponding
$\PEX$ agent, and the agent-specific ${\PEX}$-functions will be
associated with the corresponding automaton.
We will also assume a special agent $\Delta$, called the
{\em oracle\/}. $\Delta$ is a non-{$\PEX$} agent, but
it will be convenient to model its interaction with other {$\PEX$} automata
using the same communication protocol used for ``normal'' {$\PEX$} automata.

We do not commit at this point to any specific form taken by the predictions or the
explanations. We also leave open what is meant by a pair of predictions matching or
a pair of explanations agreeing: these will depend on the actual form taken by
the predictions and explanations. For example, if {\tt{PREDICT}} returns a numeric value,
then a pair of predictions could be assumed to match is they are within
some tolerance.\footnote{However, then the definition of ${\mathtt{MATCH}}$ may
not satisfy some intuitive properties of equality (like transitivity).} We assume that $\Match$ function is commutative (that is if $\Match(a,b)=true$, then 
$\Match(b,a)=true$). 

The {\tt{EXPLAIN}} and {\tt{AGREE}} functions may not be straightforward. However, for
some kinds of agents, like those that provide logic-based explanations
it is possible to identify ${\mathtt{EXPLAIN}}$  with some known descriptors, like proofs
and ${\mathtt{AGREE}}$ can be formulated in terms of well-understood
        logical operations (see Example \ref{ex:logicpex} below).
        For explanations in a less formal setting, like natural language, 
        it is likely that obtaining a definition of
        ${\mathtt{AGREE}}$ may require additional effort, and
        may require models constructed from data to decide agreement.
        
\begin{myexample}[Logic-based $\PEX$ functions]
\label{ex:logicpex}
Let $a_m$ and $a_n$ be agents that use a logic-based representation,
for hypotheses and explanations, as is the case in
Inductive Logic Programming, or ILP \citep{Mug-DeR:j:94}.
Let $H_m$ be the current hypothesis of $a_m$ and $H_n$ be the hypothesis for $a_n$.
Let predictions of a data-instance $x$ by $H_{m,n}$ be done by clauses of
the form $predict(X,C) \leftarrow {Body}$, to be read as ``The prediction of any instance
X is C if the conditions in $Body$ are true''. Then possible
${\PEX}$ functions for $a_m$ (and similarly for $a_n$) are;

\begin{itemize}
\item[(a)]  $y = {\mathtt{PREDICT}}_m(x,H_m)$ $\equiv$ $(H_m \vdash predict(x,y))$
    (where $\vdash$ is a derivability relation);
\item[(b)] ${\mathtt{LEARN}}_m$ constructs hypotheses using techniques developed
    in ILP;
\item[(c)] $e = {\mathtt{EXPLAIN}}_m((x,y),H_m)$ is the clause in $H_m$
    used to derive $predict(x,y)$;
\item[(d)] If $y_m$ is a prediction by $a_m$  and
        $y_n$ is a prediction by $a_n$ then
        ${\mathtt{MATCH}}_m(y_m,y_n)$ $:=$ $(y_m = y_n)$;
\item[(e)] if $e_m$ is a (clausal) explanation from $a_m$ and
        $e_n$ is a (clausal) explanation from $a_n$ then
        ${\mathtt{AGREE}}_m(e_m,e_n)$ $:=$ $(e_m =_\theta e_n)$ 
        (where $=_\theta$ denotes an equivalence relation
        based on the $\theta$-subsumption as defined
        in \citep{plotkin:thesis}).
\end{itemize}
(These definitions are illustrative, and not the only ones possible with logic-based
agents.)
\end{myexample}

We will also require that
only messages sent by $\Delta$ can contain $\blacktriangle$ as an explanation
and that the following restriction holds on the {$\PEX$} functions.

\begin{myremark}
\label{rem:oracle}
For any agent $m \neq \Delta$ and $D_m$, if $H_m = {\mathtt{LEARN}}_m(\cdot,D_m)$
and $(x,y,\blacktriangle,\Delta)\in D_m$,  then ${\mathtt{MATCH}}_m({\mathtt{PREDICT}}_m(x,H_m),y) = true$.
\end{myremark}
Informally, this assumes the predictions by the oracle are always correct, 
and therefore all non-oracular agents have to ensure their predictions are consistent with the predictions
they have received from the oracle.

\subsection{Communication between Automata}

We focus on sequences of pairwise interactions between {$\PEX$} automata.
Each sequence is called a {\em session\/}. Let us informally
call a pair of automata {\em compatible\/} within a session if there is
a consensus between them on the prediction- and explanation-pairs that are the same.\footnote{That is, for automata $a_m$ and $a_n$ in a session, if 
$y_m ~=~ \tt{PREDICT}_m(x)$ and $e_m ~=~\tt{EXPLAIN}_m(x)$ is a prediction-explanation pair for $x$ by $a_m$ and $(y_n,e_n)$ is
a prediction-explanation for $x$
by $a_n$, then ${MATCH}_m(y_m,y_n) ~=~ \mathtt{MATCH}_n(y_n,y_m)$ and 
$\mathtt{AGREE}_m(e_m,e_n) ~=~ \mathtt{AGREE}_n(e_n,e_m)$. More details can
be found in Appendix \ref{app:lxp}.} 

Within a session automata send each other tagged messages. The messages consist of
the sender, a tag, data-instance, prediction, and explanation.
For the present, we focus on the message-tags. Suppose $a_m$ 
$a_n$ are compatible automata in a session.  If
automaton $a_n$ receives a
message from $a_m$ with prediction $y_m$ and explanation $e_m$, then, $a_n$
sends a message to $a_m$ either terminating the session (message
tag ${\mathit{TERM}}$), or a message that contains one of 4 tags:
$\Ratify$, $\Revise$, $\Refute$ or $\Reject$. 
Formally, each automaton employs a guarded transition system,
 in which mutually-exclusive guards are used to identify which message-tag to choose. 
We refer the reader to Appendix ~\ref{app:lxp} for further details.

\subsection{{\tt{PXP}}: A Communication Protocol for {$\PEX$} Automata}
\label{sec:prot}
We adopt a protocol for messages sent by $\PEX$ automata.
We introduce the protocol for communication using single instance:
        This restriction can be easily relaxed by allowing messages that communicate about 
        multiple instances, their predictions and explanations.\footnote{The usual representation for this would be using $N$-dimensional vectors, where $N$ is the number of instances.}
Let ${\cal A}$ denote the set of $\PEX$ automata; ${\cal X}$ the set of instances;
${\cal Y}$ the set of predictions; and ${\cal E}$ the set of explanations.
Messages sent by a $\PEX$ automaton are of the form $+(m,(t,(x,y,e)))$, and messages received
are of the form $-(m,(t,(x,y,e)))$;  
where  $m \in {\cal A}\cup \{\Delta\}$, 
 $t \in \{\mathit{Init},\mathit{Ratify},\mathit{Refute},\mathit{Revise},\mathit{Reject},\mathit{Term}\}$, 
 $x \in {\cal X}$, $y \in {\cal Y} \cup \{`?'\}$, $e \in {\cal E} \cup \{`?',\blacktriangle\}$. 
Here  `?' is to be read
as ``not known''; and The explanation $\blacktriangle$ is to be read as ``oracular statement''.

\begin{figure}[ht]
    \centering
    \includegraphics[scale=0.4]{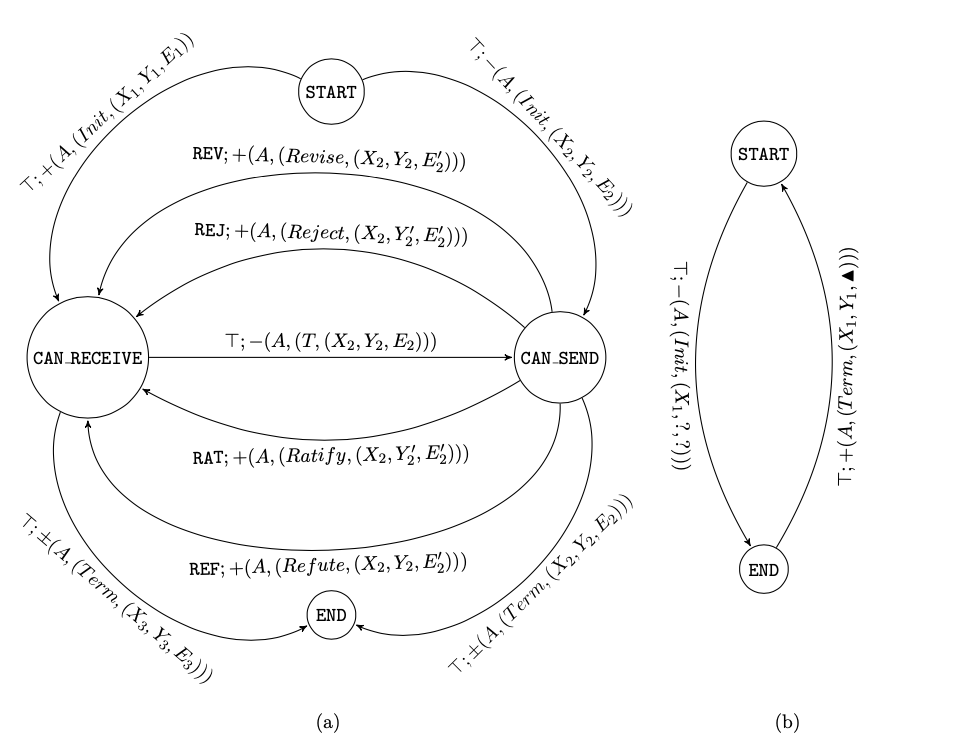}
\caption{Messages sent ('+") and received ('-') in ${\PXP}$ by: (a) automata for agents other
than the oracle; and (b) the oracle. Here $\top$ stands for a guard condition
that is trivially true. {\tt{RAT}, \tt{REF}, \tt{REV}} and {\tt{REJ}} represent the
guard conditions used by the guarded transition system, which are described
below.}
\label{fig:fsm}
\end{figure}

Figure \ref{fig:fsm}(a) shows
the messages sent and received by
an automaton for an agent (other than $\Delta$), and Figure \ref{fig:fsm}(b) shows
the corresponding messages sent and received by $\Delta$.
Informally, the figure tells us that every session between
a pair of {$\PEX$} automata has to explicitly initiated and terminated.
The session can be terminated by either automaton, and an automaton can only 
initiate a new session after terminating an existing session. 
Communication is therefore like a plain-old-telephone-system (``POTS'').
    
In Figure \ref{fig:fsm}, we want agents to send `?' for predictions and/or explanations only when they
initiate the sessions. To enforce this restriction, we will make the following assumptions about
the $\PEX$ functions: for any agent $m$, $\mathtt{PREDICT}_m$ and $\mathtt{EXPLAIN}_m$ will not return `?'  
and $\mathtt{MATCH}_m$ and $\mathtt{AGREE}_m$ will return false if any one of the arguments is  `?'.  

\noindent
To specify the $\PXP$ protocol fully we need to define the transition system.
This is done in Appendix \ref{app:lxp}.

\begin{myremark}[Need for $\PEX$ Functions and Compatability]
The execution of $\PXP$ requires definition
of the ${\PEX}$ functions. As is evident
from Definition \ref{def:trans} in Appendix \ref{app:lxp}, transitions have checks and updates that involve significant local computation.
The protocol has been substantially simplified
by assuming that the agents involved
in the interaction are compatible. 
The compatibility is about a shared meaning of
when two predictions are similar, and when two explanations agree. Without
this common understanding, the communication patterns become complex in general. 
\end{myremark}

\begin{myremark}[Synchronous Communication]
$\PXP$ is a synchronous communication
protocol, and interaction between a pair of agents has to be terminated before
commencing a new one. More elaborate protocols allowing concurrency may
be possible, at the expense of greater complexity in managing the global
configuration of the system. It may require provenance
information to be more detailed (like inclusion of session identifiers and
 session indices, along with the sender's identifier).
\end{myremark}

\begin{myremark}[Noise-free Channels]
The protocol does not account for noise in the channel, delays, or cost
of communication between agents. Implementations will need to account for
all of these aspects.
\end{myremark}

\begin{myremark}[Oracular Communication]
A question that arises is this: if the true label of data-instances
can be obtained directly from the oracle, why doesn't a $\PEX$ agent
communicated just with $\Delta$?
One aspect we have not considered in
developing the communication protocol is the cost of
communication. Collaboration between
${\PEX}$ agents will be worthwhile if:
(a) communication
to and from $\Delta$ is significantly more expensive
 (in time or money or both) than communication between ${\PEX}$ agents; and
 (b) the $\PEX$ functions of any one agent can use predictions and
 explanations from other agents effectively. Of these, (a) is likely
 to be the case if the oracle is intended to model the acquisition of
real-world data by manipulation and experimentation. The extent
to which (b) holds will depend on whether agents are able to
establish some common knowledge, and fulfil the requirements of
compatibility. 
\end{myremark}

Finally, it is common for protocols to have a preliminary (hand-shaking)
phase where some prior information is exchanged. We have not described this aspect
in the paper, but it is the phase where ${\PEX}$ agents can establish
some `common ground' needed for ensuring compatibility. We now turn to the
principal motivation for introducing the protocol, namely as a syntactic basis  for
identifying intelligible interaction.

\subsection{Intelligibility from Interaction}
\label{sec:intell}
We now have the pieces to define intelligibility as a
syntactic property that follows from the execution of the $\PXPk$ protocol.
We first give an informal description by what we propose for one-way intelligibility.
For the messages sent from $a_m$ $a_n$ to be intelligible to $a_n$, we 
want the corresponding response by $a_n$ at each step to be: (a) one of ratification, refutation
or revision; and (b) not a rejection. Additionally, we also require: (c) $a_n$ should not
refute $a_m$ at every step.\footnote{This will restrict what we consider intelligible.
For example, suppose $a_m$ provides a mathematical proof as explanation for its prediction
For a data instance $x$. $a_n$ refutes the explanation and terminates the session. With
restriction (c), $a_m$'s message is not intelligible to $a_n$. Purely by looking
at the message tag (here $\Refute$) we are unable to
distinguish if this is because $a_n$ has understood, but disagrees with steps in the proof,
or simply cannot understand the proof.} Inclusion of (c) allows a simpler version of (a) in the
definition of one-way intelligibility.

\begin{mydefinition}[One-Way Intelligibility]
\label{def:1wayintell}
Let $S$ be a session between compatible
agents $a_m$ and $a_n$ using $\PXP$. Let $T_{mn}$ and $T_{nm}$ be the
sequences of message-tags sent in a session $S$ from $m$ to $n$ and from $n$ to $m$. 
We will say $S$ exhibits One-Way Intelligibility for $m$ iff
(a) $T_{mn}$ contains at least one element in
    $\{\Ratify,\Revise\}$; and
(b) there is no $\Reject$ in the sequence $T_{mn}$.  
Similarly for  One-Way Intelligibility for $n$.
\end{mydefinition}

\noindent
(This does not mean $\Refute$ is unimportant. In $\PXP$, the receipt of a $\Refute$
message-tag is one way to initiate a revision, which may then result in a response
with a $\Revise$ tag.)

\begin{mydefinition}[Two-Way Intelligibility]
\label{def:2wayintell}
Let $S$ be a session between compatible
agents $a_m$ and $a_n$ using $\PXP$. Let $T_{mn}$ and $T_{nm}$ be the sequences of message-tags sent in a session $S$ from $m$ to $n$ and from $n$ to $m$.
We will say $S$ exhibits Two-Way Intelligibility for $m,n$  iff $S$ is One-Way Intelligible for $m$ and $S$
is One-Way Intelligible for $n$ using $\PXPk$.
\end{mydefinition}

\begin{myremark}[Strong-Intelligibility Ultra-Strong Intelligibility]
Insipired by the properties of Strong and Ultra-Strong ML in
\cite{michie:ewsl88},  we suggest the following:
(a) If every interaction between a human and an ML system is
One-Way Intelligible for the human then we will say the  ML system is strongly intelligible for
the human; and 
(b) If an ML system is strongly intelligible for the human, and there exists at
least one interaction with a  $\Revise$ message-tag in the
message sequence sent from human to the ML system, then we will say that the ML system exhibits
ultra-strong intelligibility for the human.
\end{myremark}

\noindent
We note that this does not restrict the ML system to provide symbolic explanations,
as is required in \cite{michie:ewsl88}. In Sec.~\ref{sec:reappraise} we look specifically at
some real use-cases from the literature where one or both of human and machine
provide explanations in symbolic logic. We first illustrate some application of
the intelligibility definitions to some hypothetical human-machine
interaction.

\subsubsection{Human-Interaction With Some Hypothetical ML Systems}
We consider sessions about a data-instance $x$
between a human $h$ and a ML system $m$. We assume the
following:
(a) The machine initiates the interaction by sending the human a
    prediction and an explanation;
(b) The human sends at least one message back to the machine before termination; and
(b) The human can terminate the session after one or more messages have been sent or received.
None  of these conditions are necessary for $\PXP$, and are adopted here for simplicity.
The categorisation below is purely expository, and not intended as any kind
of classification of ML engines.

\noindent
{\bf{Lucky Machine.}} Suppose the machine sends a prediction for $x$ that agrees
by chance with the human's, but the explanation is gibberish. The human refutes the
explanation with a correct one, and terminates the session. The message-tag sequence is then
$\langle \Init_m, \Refute_h,$ $ \Term_h \rangle$.
The agent-specific tag-sequences are
$T_{hm}$ = $\langle \Refute_h, \Term_h \rangle$ and $T_{mh}$ = $\langle \Init_m \rangle$
This is not One-Way Intelligible for human or machine, and therefore
not Two-Way Intelligible.
The machine can continue to be lucky, and revise its hypothesis in a way that continues to agree with the
human, but explanations continue to be nonsense. The interaction tag-sequence would then
extend to be of the form
$\langle \Init_m,\Refute_h,\Revise_m,$ $\Refute_h,$ $\Revise_m,\cdots,\Term_h \rangle$.
This is One-Way Intelligibility for the machine but not for the human. 
The session is therefore not Two-Way Intelligible.

\noindent
{\bf{Obdurate Machine.}} We now consider the a variant of the case above, in which the machine does not
(or cannot) revise its model. The sequence
$\langle \Init_m, \Refute_h,$ $ \Refute_m,$ $\Refute_h,$ $\Refute_m,$ $\cdots,\Term_h \rangle$.
This is neither One-Way Intelligible for the human nor the the machine. Obviously,
it is also not Two-Way Intelligible.

\noindent
{\bf{Compliant Machine.}} Suppose the machine is willing to revise its hypothesis to comply with
the human's prediction and explanation. An example tag sequence is
$\langle \Init_m, \Refute_h,$ $ \Revise_m, \Ratify_h, \Term_h \rangle$. This is One-Way
Intelligible for both human and machine, and therefore the session is
Two-Way Intelligible. Extended versions like
$\langle \Init_m,\Refute_h,$ $\Revise_m, $ $\Refute_h,$ $\Revise_m,\cdots,$ $\Ratify_h, \Term_h \rangle$
will similarly be Two-Way Intelligible, and is an example of the human
``teaching'' the machine.

\noindent
{\bf{Helpful Machine.}} Suppose the machine's hypothesis results in the human
revising his or her hypothesis (usually to improve performance). 
An example tag sequence is
$\langle \Init_m, \Revise_h,$ $ \Ratify_m, \Term_h \rangle$. This is One-Way
Intelligible for both human and machine, and therefore the session is
Two-Way Intelligible. Extended versions like
$\langle \Init_m,\Revise_h,$ $\Refute_m, $ $\Revise_h,$ $\Ratify_m, \Term_h \rangle$
will similarly be Two-Way Intelligible, and constitute an example of the machine
teaching the human.

\noindent
{\bf {Incomprehensible Machine.}} The machine sends a message which
the human simply cannot understand (both prediction and explanation). With
a reasonable definitions for $\Match$ and $\Agree$, the tag-sequence that results is
$\langle \Init_m, \Reject_h,$ $ \Term_h \rangle$. This is neither One-Way Intelligible
for human nor machine. 

\section{Properties of $\PXP$}
\label{sec:protprop_lxp}
\subsection{Termination}
We construct an abstraction of the set of transitions of $\PXP$
in the form of a `message-graph' (for details see Figure \ref{fig:mgraph} in 
Appendix \ref{app:lxp}). Due to cycles and self-loops in the message graph, the communication can 
become unbounded. If we consider only compatible agents then there will be no
cycles in the message graph (see Proposition \ref{prop:illegal}  in Appendix \ref{app:lxp}). The length of 
communications might still be unbounded due to  the presence of self-loops in the 
message graph. We modify the protocol  such that
each of the self-loop can occur at most $k$ times. We call this modified protocol $\PXPk$. 
It is straightforward to show that communication between compatible agents using ${\PXPk}$ is bounded. 

\begin{myproposition}[Bounded Communication]
Let Figure \ref{fig:mgraph}(c) represents the message graph of a collaborative session 
using the $\PXPk$ protocol. Then any communication in the session has
bounded length.
\end{myproposition}
Proof for this proposition is in Sec. \ref{app:proofterm} in Appendix \ref{app:lxp}.

\subsection{Correctness wrt the Intelligibility Axioms}
\label{sec:correct_axiom}

We identify conditions under which 
we can establish correctness of the {$\PXP(k)$} protocol wrt to
the Intelligibility Axioms in Sec.~\ref{sec:axioms}.
Here, by correctness we mean that when One-Way Intelligibility follows from Def. \ref{def:1wayintell} using $\PXP$, we would like to infer
intelligibility from the axioms.

\begin{mydefinition}[Actuation Constraints]
\label{def:act}
Let $S$ be an $\PXP$ session between human and
machine for a data instance $x$. Let $y_h$ and $y_m$ denote the prediction
by the human and machine respectively for $x$. Let $e_h$ and $e_m$ denote 
the human's and machine's explanation respectively.
Let $y_h^\prime$ and $y_m^\prime$ denote the human's and machine's prediction 
after hypothesis revision using $\Learn_h$ and $\Learn_m$ respectively.
Let $e_h^\prime$ and $e_m^\prime$ denote the human's and machine's prediction 
after hypothesis revision using $\Learn_h$ and $\Learn_m$ respectively.
We define the following Actuation Constraints on the
$\PEX$ functions:

\begin{description}
    \item[AC1h] If $~\Match_h(y_h,y_m)~\wedge~\Agree_h(e_h,e_m) ~=~ true$ then
        the human ratifies the machine's explanation $e_m$.
     \item[AC1m] If $~\Match_m(y_m,y_h)~\wedge~\Agree_m(e_m,e_h) ~=~ true$ then
        the machine ratifies human's explanation $e_h$.
    \item[AC2h] If $~\Match_h(y_h,y_m) ~\wedge~ \Agree_h(e_h,e_m) ~~~=~~false$ and 
    $~\Match_h(y_h\prime,y_m)~\wedge~$ $\Agree_h(e_h\prime,e_m)~ = ~true$ then
    there is improvement in the human's performance.
    \item[AC2m] If $~\Match_m(y_m,y_h)~\wedge~\Agree_m(e_m,e_h) ~=~ false$ and 
$~\Match_m(y_m\prime,y_h)~\wedge~$ $\Agree_m(e_m\prime,e_h) ~=~ true$ then
    there is improvement in the machine's performance.
\end{description}   
\end{mydefinition}

\begin{myproposition}
Let $S$ be a session between a human and a machine using $\PXP$ and the 
$\PEX$ functions satisfy the Actuation Constraints. 
If $S$ exhibits One-Way Intelligibility for $h$ then
the antecedent of one of the Machine-To-Human axioms is true. Similarly if $S$ exhibits
One-Way Intelligibility for $m$ then the antecedent
of one of the Human-To-Machine axioms is true.
\end{myproposition}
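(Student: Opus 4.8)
\emph{Proof plan.} The idea is, for each half of the statement, to pick out a single message that witnesses One-Way Intelligibility, trace it back to the guarded transition that produced it, read off the guard that must therefore have held, and hand that guard to the matching Actuation Constraint to obtain the antecedent of an axiom. Concretely, suppose $S$ is One-Way Intelligible for $h$. By Definition~\ref{def:1wayintell}(a) the tag-sequence $T_{hm}$ that $h$ sends to $m$ contains a tag $t\in\{\Ratify,\Revise\}$; I would fix one such and call $\mu$ the message of $h$ carrying it. By the shape of $\PXP$ (Figure~\ref{fig:fsm}(a)) a message whose tag is not $\Init$ is a reply to a message just received, so $\mu$ is $h$'s reply to a message $\mu'$ received from $m$ that carries a triple $(x,y_m,e_m)$. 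Since $m\neq\Delta$, no message of $m$ may contain $\blacktriangle$; and since $\Match_h$ and $\Agree_h$ return $false$ on a `?' argument, neither the $\Ratify$ nor the $\Revise$ guard of Definition~\ref{def:trans} could have fired had $y_m$ or $e_m$ been `?'. Hence $y_m,e_m$ are genuine, i.e. $e_m$ is a (non-oracular) machine-explanation, and it remains to handle the two cases $t=\Ratify$ and $t=\Revise$.

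If $t=\Ratify$, then the guard \texttt{RAT} held when $h$ processed $\mu'$, and by Definition~\ref{def:trans} this entails $\Match_h(y_h,y_m)\wedge\Agree_h(e_h,e_m)=true$, with $y_h=\mathtt{PREDICT}_h(x)$ and $e_h=\mathtt{EXPLAIN}_h(x)$. Constraint AC1h then gives that the human ratifies the machine's explanation $e_m$, which is precisely the antecedent of the Human-Confirmation axiom of Section~\ref{sec:axioms}. If instead $t=\Revise$, then $h$ invoked $\Learn_h$ on its configuration with the received prediction and explanation (provenance $m$) adjoined, obtaining a revised hypothesis with prediction $y_h'$ and explanation $e_h'$, and the guard \texttt{REV} held; by mutual exclusivity of the guards \texttt{RAT} did not hold, so $\Match_h(y_h,y_m)\wedge\Agree_h(e_h,e_m)=false$, while \texttt{REV} forces $\Match_h(y_h',y_m)\wedge\Agree_h(e_h',e_m)=true$. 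This is exactly the antecedent of AC2h, so there is an improvement in the human's performance brought about by the machine's message, which is the antecedent of the Human-Performance axiom. In both cases the antecedent of one of the Machine-To-Human axioms holds, establishing the first claim; the second claim follows by the identical argument with the roles of $h$ and $m$ exchanged (so the witnessing tag now lies in the sequence $T_{mh}$ that $m$ sends to $h$), using AC1m and AC2m and the Machine-Confirmation and Machine-Performance axioms.

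The step I expect to be the real obstacle is the appeal to Definition~\ref{def:trans}: the whole argument hinges on the protocol's syntactic guards being exactly the hypotheses of the Actuation Constraints --- that \texttt{RAT} is (equivalent to, or at least entails) ``$\Match\wedge\Agree$ on the received pair'', and that \texttt{REV} couples the failure of that test before $\Learn_h$ with its success after $\Learn_h$. Pinning this down against the transition relation in the appendix, and confirming along the way that the `?' and $\blacktriangle$ conventions really do force $\mu'$ to carry a bona-fide prediction-and-explanation (so that $e_m$ genuinely is a machine-explanation), is where the care is needed; everything after that is bookkeeping over the finitely many transition types in Figure~\ref{fig:fsm}.
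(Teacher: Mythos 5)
Your proposal is correct and follows essentially the same route as the paper's own proof: locate a $\Ratify$ or $\Revise$ tag in the sequence sent by the intelligible-for party, read off the guard of the transition that emitted it ($\Match\wedge\Agree$ for $\Ratify$; failure before $\Learn$ and success after for $\Revise$), and apply AC1h/AC2h (resp.\ AC1m/AC2m) to obtain the antecedent of the Human-Confirmation/Human-Performance (resp.\ Machine-) axiom. Your additional care about the `?' and $\blacktriangle$ conventions is a minor refinement the paper omits but does not change the argument.
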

\begin{proof}
Let us assume $S$ exhibits One-Way Intelligibility for 
the human $h$.  Let $T$ be the sequence of message-tags sent in 
$S$ from $h$ to $m$. By the definition of One-Way 
Intelligibility, $T$ has either $\Ratify$, or  $\Revise$. Let 
$y_h, y_m, e_h, e_m$ be the prediction of human and machine and 
explanation of human and machine respectively before sending this 
message tag. Let $y_h^\prime, y_m^\prime, e_h^\prime, e_m^\prime$ 
be the prediction of human and machine and explanation of human 
and machine respectively after revising the hypothesis.

Suppose $T$ has $\Ratify$. 
$m$ sent $\Ratify$ only if $\Match_h(y_h,y_m)$ is true and 
$\Agree_h(e_h,e_m)$ is true. Hence $m$'s explanation is 
ratified by $h$ by the Actuation Constraint AC1h; and the/
the antecedent of the Human Confirmation axiom is true in the
Machine-To-Human axioms. 

Suppose $T$ has $\Revise$. 
It follows from Definition~\ref{def:guards} in Appendix \ref{app:lxp} that
$h$ sends $\Revise$ iff
$\Match_h(y_h,y_m)~\wedge~\Agree_h(e_h,e_m) ~=~ false$ and
$\Match(y_h^\prime,y_m)~\wedge~$ $\Agree_h(e_h^\prime,e_m) ~=~ true$.
From Actuation Constraint AC2h the human's performance improves,
and the antecedent of the Human Performance is true in the
Machine-To-Human axioms.

\noindent
Similarly for the Machine Confirmation and Machine Performance axioms
in the Human-to-Machine axioms.
\qed
\end{proof}

Thus, the Actuation Constraints are sufficient to establish correctness of
$\PXP$ wrt the axioms in Sec.~\ref{sec:axioms}.

\subsection{Agreement with the Oracle}

The oracle $\Delta$ has been used as the source of
infallible  information  about the label for any data instance $x$.
$\Delta$ terminates the session with the message $+(m,(Term,(x,y,$ $\blacktriangle)))$, where
$y$ is the oracle's prediction for the label of $x$ and $\blacktriangle$ denotes that
the explanation is an oracular statement.
In our interaction model, $\Delta$ never initiates a session; and never sends any
message-tags other than $Term$.


\begin{myproposition}
Let $m,n$ be compatible ${\PEX}$ agents such that their $\Match$ functions are transitive ($\Match(a,b) \land \Match(b,c) =true $ then $ \Match(a,c)$ is true).
If either $m$ or $n$ has an oracular prediction for $x$ and the session ends with $m, n$
reaching a consensus on prediction and explanation, then both agents will agree with
the oracle's prediction for $x$. 
\end{myproposition}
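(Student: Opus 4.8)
The plan is to reduce the statement to one application of Remark~\ref{rem:oracle} together with a short chaining argument that uses commutativity and transitivity of $\Match$. Both the hypothesis and the conclusion are symmetric in $m$ and $n$, so assume without loss of generality that it is $m$ that holds the oracular prediction for $x$: $(x,y_\Delta,\blacktriangle,\Delta)\in D_m$, where $y_\Delta$ is the oracle's label for $x$. Since the transitions of $\PXP$ maintain the invariant $H_m = \Learn_m(\cdot,D_m)$ on the reached configuration, Remark~\ref{rem:oracle} applies and gives $\Match_m(\Predict_m(x,H_m),y_\Delta)=true$. Write $y_m := \Predict_m(x,H_m)$ and $y_n := \Predict_n(x,H_n)$ for the two agents' predictions of $x$ in the terminal configuration of the session. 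This already shows $m$ agrees with the oracle; it remains to show the same for $n$.

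Next I would extract from ``$m,n$ reach a consensus on prediction and explanation'' the fact that, in the terminal configuration, $\Match$ and $\Agree$ both hold between the two agents' prediction--explanation pairs for $x$ --- this is exactly the guard under which a $\Ratify$ is sent (Definition~\ref{def:guards}). In particular $\Match_m(y_m,y_n)=true$, and by compatibility $\Match_n(y_n,y_m)=true$ as well. The $\Agree$ half of the consensus is needed only to certify that we are genuinely in this terminal state; for the conclusion about predictions the $\Match$ facts suffice.

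Now chain them: commuting $\Match_m(y_m,y_\Delta)=true$ gives $\Match_m(y_\Delta,y_m)=true$, and combined with $\Match_m(y_m,y_n)=true$ transitivity of $\Match_m$ yields $\Match_m(y_\Delta,y_n)=true$, hence $\Match_m(y_n,y_\Delta)=true$ by commutativity once more. So $n$'s prediction $y_n$ for $x$ is $\Match$-equivalent to the oracle's label $y_\Delta$ (and so is $m$'s, from the first paragraph): both agents agree with the oracle's prediction for $x$.

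The step I expect to be the main obstacle, and the reason the transitivity hypothesis appears in the statement, is precisely this gluing: the consensus only gives ``$n$ agrees with $m$'' and Remark~\ref{rem:oracle} only gives ``$m$ agrees with the oracle'', and for a general $\Match$ (e.g. a tolerance-based relation, as flagged earlier in the paper) ``agrees with'' does not compose. A secondary point requiring care is bookkeeping: one must check that $y_m$ and $y_n$ are the predictions of the \emph{terminal} hypotheses --- so that Remark~\ref{rem:oracle} is applicable there and ``consensus'' is read on the terminal configuration --- and note that the case in which both $m$ and $n$ hold an oracular prediction needs no separate treatment, since the oracle returns a single label for $x$ and the argument applies verbatim from either side.
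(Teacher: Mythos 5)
Your proof is correct and follows essentially the same route as the paper's: Remark~\ref{rem:oracle} for the oracular match, the $\Ratify$/consensus guard for $\Match_m(y_m,y_n)$, then transitivity (plus commutativity) to chain to the oracle's label, with compatibility handling the $m$/$n$ symmetry. The only cosmetic difference is that you state the final conclusion as $\Match_m(y_n,y_\Delta)$ whereas the paper applies compatibility once more to land on $\Match_n(\cdot,y)$, i.e.\ agreement as judged by $n$'s own $\Match$ function; you have all the ingredients for that last step already in hand.
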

\begin{proof}
Let us assume that $m$ has an oracular
prediction $y$ for $x$ which  means $(x,y,\blacktriangle,\Delta) \in D_m$ and
$\Match_m(y,\Predict(x,H_m))=true$ for any
hypothesis $H_m$ constructed by $m$ after receiving the oracular
prediction for $x$ (by Remark \ref{rem:oracle}). 
Let the session $S_{mn}$ for $x$ ends with $m, n$ reaching a 
consensus on prediction and explanation with $H_m$ and $H_n$ be
the hypothesis for $m$ and $n$ respectively. It means 
$\Match_m(\Predict(x,H_m),\Predict(x,H_n))=true$. Since $\Match_m$ is transitive, we
conclude that $\Match_m(y,\Predict(x,H_n))=true$. 
Since $m$ and $n$ are compatible, $\Match_n(\Predict(x,H_n),y)$ is true. So both the
agents will agree with the oracle's prediction for $x$.
\end{proof}

\begin{myremark}
 It is sufficient for only one of $m$ or $n$ to
communicate to $\Delta$ about $x$. Extending to set of instances $X = \{x_1,x_2,\ldots,x_k\}$,
the cost of communicating to the oracle can be reduced for both $m$ and $n$ by restricting
oracle-communication for each agent to partitions $X_m$ and $X_n$ respectively.
\end{myremark}

\subsection{Completeness wrt a Dialogue Model}
\label{sec:correct_dial}
In \citep{madumal} the authors propose a graphical representation of interactions
in a dialogue between a questioner Q and an explainer E. Although in principle,
Q and E could be either human or machine, the interactions are most easily understood
in the context of a human questioner and an machine-learning system as explainer.  
The interaction graph is in Figure \ref{fig:madprot}.

\begin{figure}[ht]
    \centering
    \includegraphics[scale=0.45]{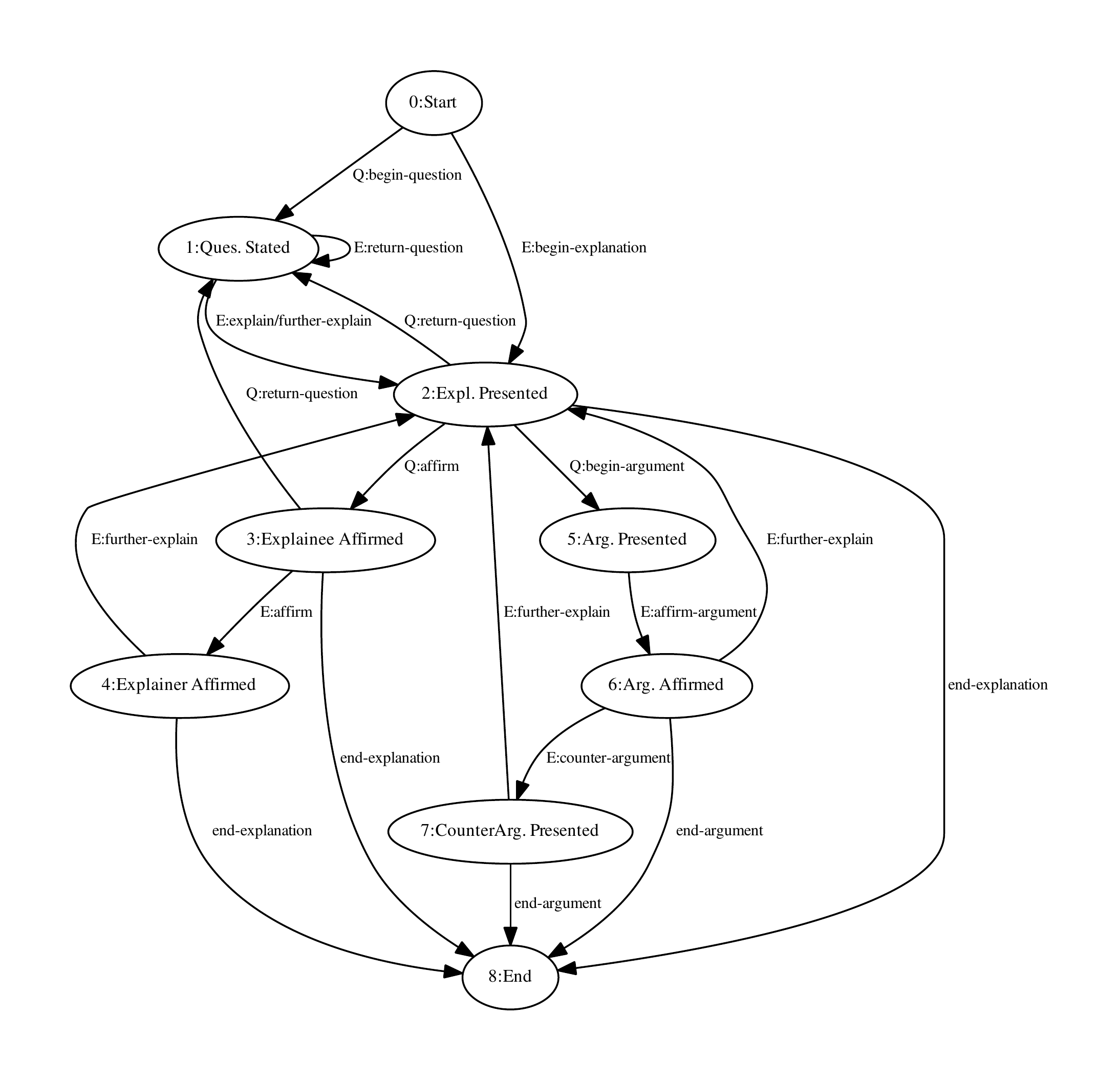}
    \caption{The interactions proposed in \protect{\citep{madumal}}.
    The node-labels represent
        states and edge-labels representing actions (the prefix on the edge-label
        denotes the originator of the action). The node-numbering is ours.}
    \label{fig:madprot}
\end{figure}

Here, we will compare the graph in Fig.~\ref{fig:madprot} \footnote{we will call ${\mathtt{MMSV}}$--based on the
proposers, was (manually) constructed by
examining interaction transcripts, and the result clearly reflects
constructs identifiable in dialogues between humans.} with the $\PXPk$ protocol. Let us denote ${\mathtt{MMSV}}$ without the edge $(1,1)$ as
${\mathtt{MMSV}}^-$. 
An example of conversations of length up to 5
are shown in Table \ref{tab:madcompare}. The corresponding $\PXPk$ transitions are in Table \ref{tab:madcompare}. 
We show that all bounded length ``conversations" allowed in  ${\mathtt{MMSV}}^-$ have corresponding finite length interaction using $\PXPk$. Additionally, we show that all interactions using
$\PXPk$ are Two-Way intelligible. That is:

\begin{myproposition}
Every path of length $l$ in the graph for ${\mathtt{MMSV}^-}$
corresponds to a path of at most $l$ using $\mathtt{PXP(l)}$.
\end{myproposition}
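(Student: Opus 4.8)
The plan is to set up an explicit simulation from paths in ${\mathtt{MMSV}}^-$ to runs of $\PXP(l)$, proceeding by induction on path length $l$. First I would enumerate the edges of ${\mathtt{MMSV}}^-$ (i.e. ${\mathtt{MMSV}}$ with the self-loop $(1,1)$ deleted), classifying each edge by its originator (Q or E) and by the state-transition it effects. For each such edge-type I would fix a corresponding $\PXP$ message-tag (or short sequence of tags): the initial ``E explains'' edge maps to $\Init_m$; a Q-action that challenges or asks for justification maps to $\Refute_h$; a Q-action that accepts maps to $\Ratify_h$; an E-action that modifies or re-explains maps to $\Revise_m$ or $\Refute_m$ as appropriate; and any terminal/``conversation ends'' edge maps to $\Term$. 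The key claim is a state-correspondence invariant: I would define a map $\phi$ from nodes of ${\mathtt{MMSV}}^-$ to $\PXP$ configurations (more precisely, to the ``control state'' of the guarded transition system together with the relevant slice of the dataset/provenance), and show that whenever $(\,u \to v\,)$ is an edge in ${\mathtt{MMSV}}^-$, there is a $\PXP$ transition (or at most a bounded number of them, charged against the self-loop budget $k=l$) from a configuration in $\phi(u)$ to one in $\phi(v)$, with the guard conditions $\mathtt{RAT},\mathtt{REF},\mathtt{REV},\mathtt{REJ}$ satisfied by a suitable choice of the underlying $\PEX$-function outputs.

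Concretely, the induction runs as follows. Base case: a path of length $0$ or $1$ starting at the designated start node of ${\mathtt{MMSV}}^-$ corresponds to the empty run or the single $\Init$ transition of $\PXP$, which is always available. Inductive step: given a path $\pi$ of length $l-1$ in ${\mathtt{MMSV}}^-$ ending at node $u$, with (by hypothesis) a $\PXP(l)$ run ending in configuration $c \in \phi(u)$ using at most $l-1$ transitions, extend $\pi$ by one admissible edge $u \to v$. By the edge-classification above I pick the matching tag and invoke the corresponding clause of Definition~\ref{def:trans} (the transition definition in Appendix~\ref{app:lxp}): I must exhibit $\PEX$-function values making the guard true. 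Since the paper leaves $\mathtt{PREDICT},\mathtt{EXPLAIN},\mathtt{MATCH},\mathtt{AGREE},\mathtt{LEARN}$ largely unconstrained (only commutativity of $\Match$, the `?'-conventions, and Remark~\ref{rem:oracle} are imposed, and none of those obstruct the choices), such witnessing values exist; this is where the ``corresponds'' in the statement is doing the honest work — it is a simulation, not a bisimulation, so I only need one matching $\PXP$ run, not all of them. The bound ``at most $l$'' is maintained because each ${\mathtt{MMSV}}^-$ edge consumes at most one $\PXP$ transition, and the reason $(1,1)$ had to be deleted is precisely that it is the one edge whose unbounded iteration would blow the $k$-budget; without it every path of length $l$ stays within $l$ transitions, so the bound $\mathtt{PXP}(l)$ (self-loops capped at $l$) is never exceeded.

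The main obstacle I anticipate is pinning down $\phi$ so that the guard conditions genuinely line up — in particular handling the branch points of ${\mathtt{MMSV}}^-$ where the same source state admits several outgoing actions (accept vs.\ challenge vs.\ ask-why). The $\PXP$ guards $\mathtt{RAT},\mathtt{REF},\mathtt{REV},\mathtt{REJ}$ are mutually exclusive, so $\phi(u)$ cannot be a single configuration if $u$ has outgoing edges of more than one guard-type; it must be a \emph{set} of configurations (one per outgoing action), and I have to check these sets are consistent with the configurations forced by the incoming edges. A secondary subtlety is making sure no ${\mathtt{MMSV}}^-$ edge forces a $\Reject$ (which would violate later intelligibility claims but not this proposition) — here it suffices to observe that every Q-action in ${\mathtt{MMSV}}^-$ is a form of engaged response (question, challenge, or acceptance), never an ``I don't understand'' bail-out, so $\mathtt{REJ}$ is never the required guard. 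Once the edge-to-tag dictionary and the set-valued $\phi$ are fixed, the induction itself is routine bookkeeping; I would present the dictionary as a short table (paralleling Table~\ref{tab:madcompare}) and then give the inductive step in two or three lines per edge-class.
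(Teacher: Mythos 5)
Your overall strategy --- an edge-to-tag dictionary for ${\mathtt{MMSV}}^-$ followed by induction on path length --- is close in spirit to what the paper does (the paper skips the induction and instead decomposes every path via the regular expression $0(1+\epsilon)2(12+312+342+562+5672)^*(8+38+348+568+5678)$ and then reads the tag sequence off a lookup table). But there is a genuine gap in your inductive step, and it sits exactly where the ``at most $l$'' in the statement is doing real work. $\PXP$ is a strict-alternation (POTS-style) protocol: after $a_n$ sends a message it enters ${\mathtt{CAN\_RECEIVE}}$ and cannot send again until $a_m$ replies. Several segments of ${\mathtt{MMSV}}^-$ violate this alternation --- e.g.\ in the block $2\!\to\!3\!\to\!4\!\to\!2$ the edges $(3,4)$ (E:affirm) and $(4,2)$ (E returns to presenting) are two consecutive explainer moves, and similarly for $562$, $567$, $5672$. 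For such an edge there is \emph{no} single matching clause of Definition~\ref{def:trans} available from the configuration reached after the previous edge, so your per-edge simulation step fails there; no choice of $\PEX$-function values rescues it, because the obstruction is the control state, not the guards. The paper's proof handles precisely this by collapsing the multi-edge blocks $P_{342},P_{562},P_{567},P_{5672}$ to a \emph{single} $\Refute_E$ tag, which is why the resulting $\PXP$ path can be strictly shorter than $l$. Your dictionary assigns a (nonempty) tag to every edge-type and your bound argument asserts ``each edge consumes at most one transition''; to close the gap you must instead allow designated edges (or two/three-edge blocks) to consume zero additional transitions, and verify the alternation invariant as part of $\phi$.

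Two smaller points. First, your explanation for why $(1,1)$ is deleted (``it would blow the $k$-budget'') is not the paper's reason and is not right: the issue is that a returned question about a question has no counterpart transition in $\PXP$ at all, since questions in a session are restricted to the label and its explanation and $\Match$/$\Agree$ are Boolean --- it is a missing edge, not an exhausted budget. Second, your observations that $\Reject$ is never forced and that $\phi$ must be set-valued at branch points are sound, and in fact the paper quietly avoids the latter issue by never defining a node-to-configuration map at all; if you do carry out your more explicit simulation with a corrected, block-aware dictionary, you would end up with a more rigorous argument than the one in Appendix~B.
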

The proof for this proposition is in Appendix \ref{app:madlxp}. We note
also that not all paths of length $l$ in the graph for ${\mathtt{MMSV}^-}$ map
to One-Way or Two-Way Intelligible interactions with $\mathtt{PXP(l)}$. 

\section{Case Studies: Agents Providing Logic-Based Explanations}
\label{sec:reappraise}

In this section, we re-cast some reports from the
literature involving humans and ML systems
as interactions demonstrating
One-Way or Two-Way Intelligibility. We focus specifically on
cases where the human or ML system employ logic-based explanation
(ML systems developed in the category of Inductive Logic Programming,
or ILP, are in this category).
The purpose of this exercise is three-fold. First, it demonstrates
how the interaction model we propose can be applied to
characterise the interaction between humans and ML systems.
Secondly, it highlights the difference between human-machine interactions that are One- and Two-Way Intelligible. Thirdly, it shows simply using ML systems capable of symbolic
explanation does not automatically imply Two-way intelligibility.

\subsection{Examples of One-Way Intelligibility}
\label{sec:onewayex}

\noindent
{\bf {Case Study 1.}}
We start by returning to the classification of Covid X-rays in
Section \ref{sec:axioms}. The radiologist's assessment of the ML model
by the radiologist on 30 ``test'' images is shown in Table \ref{tab:covidxredux}.
For each image, the machine provides a prediction along with a symbolic explanation.
We also tabulate a summary of the textual feedback provided by the 
radiologist. For this evaluation, we ignore the phase of model-construction
by the ML system, and focus only on the interaction of the radiologist
and the ML system on the test data. In obtaining a mapping
of this interaction to ${\PXPk}$ tags in
the last column, we have assumed the following:
(i) If the radiologist marks the prediction as correct and the
    explanation as sufficient, then this
    maps  to sending a message with a ${Ratify}_h$;
(ii) If the radiologist marks the  explanation as insufficient
    (irrespective of the assessment of the prediction)
    then this  maps to sending a message with a ${Refute}_h$ tag;
(iii) If the radiologist marks the explanation as incorrect but the
prediction as correct or unsure and 
provides clarifications as to why the explanation incorrect, then this maps to sending
    a message with a ${Refute}_h$ tag;
(iv) If the radiologist marks the explanation and prediction as incorrect and
    provides no clarifications, then this maps to sending a
    message with a ${Reject}_h$ tag.
\begin{table}
\begin{center}
            {\scriptsize{
                \begin{tabular}{|l|c|c|c|c|c|} \hline
                \multicolumn{4}{|c|}{Radiologist's Opinion about Model's} & Radiologist's & $\PXPk$'s\\ \cline{1-4}
                           & \multicolumn{3}{c|}{Prediction} & Feedback & Tags \\ \cline{2-4}
                Explanation & Correct   & Wrong & Unsure    &   &\\ \hline
                Sufficient & 17 & 0  & 0 & Clarifies: 17 & ${\Ratify}_h$: 17\\
                Incomplete & 1 & 3 & 1 &   Clarifies: 5 & ${\Refute}_h$: 5\\
                Incorrect & 3 & 2 & 3 &    Clarifies: 6 & ${\Refute}_h$: 6, ${\Reject}_h$:2\\\hline
            \end{tabular}
            }}
        \end{center}
    \caption{Radiologist's Assessment of machine predictions and explanations for Covid. Also
    shown is a summary of the radiologist's actions and a mapping of these
    an ${\PXPk}$ message. 
    }
    \label{tab:covidxredux}
\end{table}

In $\PXPk$ terms, the radiologist refutes
the explanations in 11/30 instances, rejects 2/30 instances and ratifies 
them in  remaining 17 cases.
There is no provision in
the system in \citep{covid} for any further interaction (for
example, to revise the machine's model). Thus the interaction
between radiologist and the ML-system in \citep{covid} is
characterised by one of 3 tag-sequences:
(a) $\langle {\Init}_m,{\Ratify}_h,{\Term}_{h/m} \rangle$ (17 cases); 
(b) $\langle \Init_m,\Refute_h, \Term_{h/m}\rangle$ (11 cases); or 
(c) $\langle {\Init}_M,{\Reject}_H,{\Term}_{h/m} \rangle$ (2 cases). Here $h/m$ is used
to denote either human or machine. By
Definition \ref{def:1wayintell} only
the 17 instances in (a) are One-Way Intelligible for the human.

\vspace*{0.3cm}
\noindent
{\bf {Case Study 2.}}
As a second illustration of One-Way Intelligibility, we consider 
information provided by a human as logical statements provided to an ML system in the form
of relevant domain-knowledge. We re-examine
recent results reported in \citep{dash:botgnn,dash:drm}
in the area of drug-design. This  is well-known to be a time-consuming and
expensive process, requiring the involvement of significant amounts of human chemical
expertise. ML-based models have been used to assist in this by constructing
models by using chemical structure to predict
molecular activity (like binding affinity, toxicity,
solubility and so on).
The reported experiments consider the construction of predictive
models using relations which are typically used to construct human-understandable
explanations
(like known chemical ring structures, functional groups, motifs and the like).
The data provided were in the form of logical statements constituting ``most-specific explanations''
for molecules using the domain-knowledge.\footnote{These are logical clauses constructed
from human-supplied domain-knowledge using a technique developed in Inductive Logic Programming \citep{mugg:progol}).} 
The authors tabulated predictive performance for
two kinds of deep neural networks (a multi-layer perceptron, or MLP, and a graph neural network, or GNN). Table \ref{tab:tox1} lists
the number of datasets on which performance improvements are observed,
with the inclusion of the most-specific explanations.
In \citep{dash:botgnn}, the authors recorded a limitation of their
approach: the models constructed by the deep networks are
not understandable by a chemist.

 \begin{table}[t]
    \begin{center}
        \begin{tabular}{|l|ccc|} \hline
               & \multicolumn{3}{|c|}{Comparative Performance} \\
    DNN    & \multicolumn{3}{|c|}{(with domain-knowledge)} \\ \cline{2-4} 
        Type   & Better & Same & Worse \\ \hline
        MLP    &   71   &  0   &  2       \\
        GNN    &   63   &  9   &  1     \\ \hline 
        \end{tabular}
    \end{center}
    \caption{The use of human-supplied domain-knowledge by two kinds
        of deep neural networks (DNNs): mult-layer perceptrons (MLPs) and graph-neural
        networks (GNNs). Estimates of performance are obtained on 73 different
        datasets. Here, ``Better'' (respectively,
        ``Same'' and ``Worse'') means
        the use of domain-knowledge results in an
        improvement in performance (respectively,
        no change, and worse performance).}
    \label{tab:tox1}
\end{table}

So far, we have only described $\PXPk$ as dealing
with a single instance, it would appear inappropriate to
use describe the interaction between human and machine involving
a dataset with many instances. However, there is no
reason why this should be the case.
The message from an agent can be about a set of
instances, usually represented by a vector ${\mathbf x}$.
In this case, we will take the human as initiating a message with
predictions and explanations $(\mathbf{y},\mathbf{e_{y| x, B}})$ about $\mathbf{x}$.\footnote{Each
entity is a $N$-dimensitional vector, where $N$ is the number
of instances in ${\mathbf x}$. For any entry $x$ in $\mathbf{x}$,
there are corresponding entries $y$ in $\mathbf{y}$ and $e_{y|x,B}$ in $\mathbf{e}_{\mathbf{y}|\mathbf{x},B}$,
Here $y$ is the (human)'s prediction of $x$ and $e_{y|x,B}$ is the most-specific
explanation for the prediction $y$ given the human's background knowledge. $y|x,B$
should be read as $y$ given $x$ and $B$.} For the purpose of this exercise, we will assume the ML engine
already has a model constructed without the use of background knowledge
for the dataset $\{(\mathbf{x}_i,\mathbf{y}_i)\}_1^N$ that
matches predictions in $\mathbf{y}$. One example is the trivial model that simply consists of
a lookup-table of the data and predictions provided. 

On receipt of the message from the human, the prediction of the machine's  current  model
for $\mathbf{x}$ will correctly be $\mathbf{y}$. That is,
$\Match_m(\cdot,\cdot)$ will be true. The question is only whether the human's explanation
$e_{{\mathbf y}|{\mathbf x},B}$ will match the machine's explanation, which is denoted by
$e_{{\mathbf y}|{\mathbf x},\emptyset}$. If $\Agree_m(e_{{\mathbf y}|{\mathbf x},\emptyset},e_{{\mathbf y}|{\mathbf x},B}) = false$, then as per the
definition of $\PXPk$ the machine will send either a $\Revise$ or $\Refute$ tag.
For any ${\mathbf x}$ if the Actuation Constraints are true and
the ML engine can alter its model to improve performance after
receiving the human's message then the machine will send a $\Revise$ tag,
otherwise the machine will send a $\Refute$ tag.
Here, we assume $\Learn_m$ has some mechanism of estimating improvement in
    predictive performance. From Table \ref{tab:tox1} we see that
    if this assessment is accurate. The $\PXPk$
    tag-sequence for the MLP will therefore be $\langle {\Init}_h,{\Revise}_m,{\Term}_{h/m} \rangle$ in
    71 out of 73 datasets, and $\langle {\Init}_h,{\Refute}_m,{\Term}_{h/m} \rangle$ in the
    2 of 73 cases. One-Way Intelligibility for the machine only follows in the former cases
    and not the latter. The corresponding numbers for the GNN are 
    63 and 10 respectively. These results highlight the
    following:
    (1) The authors in \citep{dash:botgnn,dash:drm}
    only point out unintelligibility of the machine-model for the human.
    However, with $\PXPk$, we can meaningfully
    talk about intelligibility of the human domain-knowledge for the machine; and
    (2) Intelligibility of human-knowledge for the machine depends
        on the machine (here used to include the representation language as well):
        this is apparent from the difference in numbers tabulated
        for the MLP.
        
The literature also contains studies
in which the machine refutes human explanations in logical forms other than
most-specific clauses.
For example, in the first report of the 
of a Robot Scientist in \citep{ross:robot}, the robot identifies incompleteness
in an existing human explanation of a metabolic network in yeast that is in
the form of a graph. Conversely,
it is also possible for a human to revise his or her
predictions on being provided with a machine-constructed logical
explanation. For example, in \citep{mugg:expl}, the models constructed by
an ML system are shown to improve human predictive performance under
some conditions. We conjecture that these cases can similarly
shown to be instances of One-Way Intelligibility
using $\PXPk$ (with tag-sequences
$\langle {\Init}_h , {\Refute}_m , {\Term}_{h/m} \rangle$ in the former and
$\langle {\Init}_m,{\Revise}_H , {\Term}_{h/m} \rangle$ in the latter). 

\subsection{Examples of Two-Way Intelligibility}
\label{sec:twowayex}

The most interesting demonstrations of Two-Way Intelligibility
arise when interactions between human and machine are sustained
beyond a single exchange. We have selected two such instances from
the literature.

\vspace*{0.3cm}
\noindent
{\bf {Case Study 3.}}
In \citep{moyle}, the authors describe an interactive, human-ML system, called ACUITY, for
detecting cyber-attacks. ACUITY is
a tool that allows the identification of malware, using a combination
of human-expertise and machine learning 
techniques. ACUITY is part of a larger software
environment that allows cybersecurity experts to query, consult and modify
a large (cloud-based) database of security logs, and incorporate specialised
knowledge of security threats. ACUITY itself is an extension of the Inductive Logic Programming (ILP)
engine Aleph \citep{aleph}, running in ``incremental'' mode. In this mode,
interaction commences with the human providing a data instance; followed
by iterations of the ILP engine constructing a hypothesis and the human providing
feedback to correct the hypothesis until the human is satisfied with the result.
Two key assumptions in the work appear to be:
(a) that the human's prediction is always correct; and
(b) there is `revision' by the human of his or her model.
Figure \ref{fig:acuity} shows a simple text-based version of the options available
to the human when providing feedback to ACUITY.
\begin{figure}[t]
{\footnotesize{
Top-level menu: 
   \begin{verbatim}
   Options:
        -> "ok." to accept default example
        -> Enter an example
        -> ctrl-D or "none." to end
    Response [default:none] ?
  \end{verbatim}
Second-level menu: (based on providing an example at the top-level menu and reply from ACUITY)
    \begin{verbatim}
   Options:
        -> "accept." to accept clause
        -> "prune." to prune clause and its refinements from the search
        -> "extent." to show the extent of the proposed hypothesis
        -> "db_extent." to show the extent of the proposed hypothesis
                        with respect to an external database
        -> "rebut." add rebuttal to the proposed hypothesis
        -> "constrain." add constaints from the proposed hypothesis
        -> "pick." add constaints from the most specific hypothesis
        -> "overgeneral." to add clause as a constraint
        -> "overgeneral because not(E)." to add E as a negative example
        -> "overspecific." to add clause as a positive example
        -> "overspecific because E." to add E as a positive example
        -> any Aleph command
        -> ctrl-D or "none." to end
    Response?
    \end{verbatim}
}}
    \caption{Text-based choices available
        to a human interacting with ACUITY. 
    }
    \label{fig:acuity}
\end{figure}

The reader will recognise that a number of options available to the human
have a close connection to the message-tags in $\PXPk$
For the purpose of this exercise, we will assume the following:
(a) If the human provides an example in the top-level menu then
    this maps the human sending a ${\Init}_h$ tag;
    Here the most-specific explanation is as in Case Study 2.
(b) If the human chooses to end the session then this maps
    to the human sending either  $\Ratify_h,\Term_h$ or $\Refute_h,\Term_h$ tags;
(c) If the human selects {\tt{accept}}, then this maps to
    human sending ${\Ratify}_h$;
(d) If the human selects any one of {\tt{prune}}, {\tt{rebut}}, {\tt{pick}},
    {\tt{constrain}}, or the {\tt{overgeneral}}/{\tt{overspecific}} options
    then this maps to the human sending ${\Refute}_H$; and
(e) Selections {\tt{extent}} and {\tt{db\_extent}} are not part of
    the $\PXPk$ protocol.
    
Each of the choices in (d) results in the ML engine automatically updating the background $B$
({\tt {rebut}} actually adds counter-examples, but this can be seen as adding
a constraint to $B$).
An inconsistency in the implementation choices made by the
ILP engine within ACUITY results in a difficulty
in exactly modelling the interaction.
The issue is the following: when
an example is entered (or re-entered) at the top-level menu, the
ILP engine automatically constructs (or re-constructs) a hypothesis
that agrees with the human's prediction. However the ILP engine
does not automatically construct (or re-construct) a hypothesis
when the human provides information at the second-level menu. 
Instead, the hypothesis is only constructed `on-demand'. The
reason for this lazy approach is unclear, but it is thought to be
one of efficiency. For our purposes, it is easier to consider
a slight alteration that does not affect correctness,
rectifies this, namely: the ILP engine automatically updates
its hypothesis whenever any information is provided by the human.
\begin{table}
\begin{center}
    \begin{tabular}{|c|c|c|} \hline
        Session & Expert Action & $\PXPk$\\
                &  (ACUITY)     &  Message Tag(s)\\ \hline
               &  Enter example &  ${\Init}_h$, ${\Revise}_m$ \\
                1& {\tt{rebut}}   & ${\Refute}_h$, ${\Revise}_m$ \\
                & {\tt{none}}    & ${\Ratify_h ,\Term}_h$ or $\Refute_h,\Term_h$ \\ \hline
              & Re-enter example & ${\Init}_h$, ${\Revise}_m$ \\
                2& {\tt{extent}}  &   -- \\
                & {\tt{pick}}    &  ${\Refute}_h$, ${\Revise}_m$ \\
                & {\tt{none}}    & ${\Ratify_h,\Term}_h$ or $\Refute_h,\Term_h$ \\ \hline
    \end{tabular}
\end{center}
    \caption{Two sessions with ACUITY. The cybersecurity expert provides an
        incident in the logs to be explained. The ML system (the ILP engine Aleph,
        with the modification described earlier)
        constructs a hypothesis. In this run, the expert finds the
        explanation incorrect. This results
        in new constraints being added (here shown through the use of {\tt{rebut}},
        and {\tt{pick}}). The corresponding
        ${\PXPk}$ messages are shown in the rightmost column.}
    \label{tab:acuitylog}
\end{table}

With this change, we are able to identify more
clearly the $\PXPk$
message sequences resulting from the human's interaction with ACUITY. 
Analogous to Case Study 2, the human providing an example $x$ along with
background knowledge $B$ can be seen as sending an ${Init}_h$ containing the
triple $(x,y_h,e_{y_h|x,B})$.\footnote{A correct analogy to Case Study 2 would
require representing the entries as $1$-dimensional vectors.}
The assumption that the human's prediction is always correct is
reflected in requiring that the prediction $y_m$ made by the machine for
$x$ matches the prediction $y$ by the human for $x$. Along
with the constraint of compatible agents, this means that 
it will always be the case that $\Match_m(y_m,y_h) = true$ 
and $\Match_h(y_h,y_m) = true$.
According to the definition of transitions in $\PXPk$
it is not difficult to see that these
constraints ensure: (i) that neither human nor machine ever sends a
$\Reject$ tag; (ii) the human can only send $\Init$, $\Ratify$, 
$\Refute$ or $\Term$ tags; and (iii) the machine can only send
$\Ratify$, $\Revise$ or $\Refute$ tags. It is not difficult
to see that there exist interactions that exhibit Two-Way Intelligibility.
Example of message-tag sequences with Two-Way Intelligibility are:
$\langle \Init_h, \Revise_m, \Ratify_h, \Term_h \rangle$; and
$\langle \Init_h, \Revise_m, \Refute_h, \Revise_m,$ $\Ratify_h, \Term_h \rangle$.

We examined logs of sessions with ACUITY.\footnote{
Data and annotated ACUITY logs kindly provided
by Steve Moyle, Amplify Intelligence UK and Cyber Security Centre,
University of Oxford.}
The sequence of selections made by
a cyber-security expert for two such sessions is shown in Table \ref{tab:acuitylog},
along with the $\PXPk$ message-tag sequence. Both sessions are classified
as being Two Way Intelligible by Def.~\ref{def:2wayintell} when the human ends
the session with $\Ratify_h,\Term_h$.

\noindent
In each of the sessions the machine
constructs a hypothesis, the human examines the prediction
and refutes its explanation. The 
machine revises in response. The interaction is very similar in spirit 
to a much older logic-based human-machine system which we consider next.

\vspace*{0.3cm}
\noindent
{\bf {Case Study 4.}}
To the best of our knowledge,  the most direct, long-running real-world example
of human-refutation of the logical explanation constructed by
a machine, which then results in revision of its hypothesis by the machine 
is from an early decision-support tool in chemical pathology.
PEIRS \citep{PEIRS:j:1993} was an extremely successful
decision-support tool for a pathologist.
The machine's hypothesis at any point in time was an ordered set of rules. PEIRS relied entirely on the ability of the pathologist
to read, understand and refute the model's prediction and explanation. The
explanation consisted of the sequence of rules used to arrive at the conclusion. The
refutation in turn triggers a revision to the machine's hypothesis. PEIRS 
was the first in a family of tools developed under the umbrella of 
``ripple-down rules'', or RDRs, which have continued to be deployed and used with great
commercial success \citep{Comp:Kang:b:2021}. Interactions for all such
systems consist of iterations of one or the other of the following:

\begin{enumerate}
    \item[A.] The machine proposes a prediction and an explanation for a data instance,
        and the human accepts the prediction and explanation; or
    \item[B.] The machine proposes a prediction and an explanation for a data instance,
        the human refutes the prediction or explanation or both, and provides
        corrective feedback. The machine (necessarily) revises its model
        to be consistent with
        the correction (the modification may add a new rule or change an existing
        one).
\end{enumerate}

It is not difficult to see that the interaction (A) will map to
the sequence $\langle {\Init}_m,{\Ratify}_h,$ ${\Term}_{h/m} \rangle$; and (B) will map
to $\langle {\Init}_m,{\Refute}_h,{\Revise}_m,$ ${\Ratify}_h,{\Term}_{h/m} \rangle$. More sophisticated implementations can also result in (B) containing multiple occurrences
of iterations of the $\Refute_h,\Revise_m$ pair, before ending in
$\Ratify_h,\Term{h/m}$. 
Both variants are within the scope of $\PXPk$ with a large enough value of $k$.
The reader will recognise that type (A) interactions constitute One-Way Intelligibility
for the human; and type (B) interactions constitute Two-Way Intelligibility.

Quantifiable evidence in PEIRS of type (B) interactions can
be obtained from the results reported in \citep{PEIRS:j:1993}, summarised here:

    \begin{itemize}
        \item The machine commenced operation with an initial model.
            containing approximately 200 rules
        \item Over a period of about 120 working days, the machine's entire
            model consisting of 950 rules was obtained by interaction with a pathologist
            with no prior programming skills, purely by
            performing type (B) interactions with the machine.
    
        \item The machine-based model's joint accuracy of prediction and
            explanation (``right for the right reasons'') was about
            92\%. The model was used routinely, interpreting around
            500 reports a day.
    \end{itemize}

\noindent
It is evident that the principal interaction mechanism for updating the PEIRS
model can be characterised as demonstrating Two-Way Intelligibility.
The techniques developed in PEIRS were commercialised by Pacific
Knowledge Systems and it was later acquired by Beamtree Holdings
Limited. In \citep{compton2013}, some details
are provided of Beamtree's RDR system which had 
acquired about 3000 rules and had interpreted biochemistry reports of about 7 million
patients over a period of about 9 years (by 2013)
. The machine-model continued
in use until October 2022, interpreting about 8000
reports a day.  The principal mechanism for rule acquisition in
this system remains type (B) interaction with
a human expert.\footnote{Based on data and logs kindly provided by
by Lindsay Peters,
Beamtree Holdings Ltd; and Paul Compton,
University of New South Wales.} 

\section{Related Work}
\label{sec:relwork}

In Section \ref{sec:reappraise} we referred to a number of sources that
are relevant to one or the other of the Intelligibility Axioms in
Section \ref{sec:axioms}. In this section we turn to some
other relevant work.

From the framework developed in
this paper intelligibility can be seen as a ternary
relation involving: the information-provider, the
information provided, and the information-recipient.
In the literature on Explainable ML,
this has re-emerged as an important requirement for
the acceptability of ML (see \citep{Mill:j:2019} for a recent example citing earlier work~\citep{Hilt:j:1990},
and~\citep{dm:ml} for an early identification of this).
Furthermore, the explainer and the explainee can be, at different times, the same person or agent.

A large literature has built up over recent years addressing the problems of inscrutable ``black box'' models generated by some modern machine learning techniques which then require mechanisms for ``explainability''~\citep{Guid:etal:j:2019}. There are several
excellent reviews available on Explainable Machine Learning (XML), and
we refer to the reader to them. More broadly, the origins of XML 
can be found in a prominent DARPA project, launched in 2016 titled ``Explainable Artificial Intelligence (XAI)''~\citep{DARP:m:2016} which is
credited with initiating efforts in XML~\citep{Adam:x:2022}. In fact,
XAI itself can be viewed as a continuation of pre-existing
research trends: earlier approaches in machine learning largely used models based on knowledge representations developed in artificial intelligence that were designed to be interpretable (such as rules, decision trees or logical theories), whereas only recently the drive for accuracy on ever-larger datasets has led to models that require explanation~\citep{Rudi:etal:j:2022}.

The DARPA project used the term ``explainability'' to denote the use of techniques to 
generate an explanation for a model~\citep{Gunn:Aha:j:2019}.
A distinction can therefore be made between \emph{interpretable} models, which are constrained to work in a way that is (at least, in principle) understandable to humans, and \emph{explainable} methods, which are applied to the results obtained from black box models~\citep{Rudi:etal:j:2022}.
This difference is similar to the distinction made between model transparency
and \textit{post hoc} explainability for black box models~\citep{Lipt:j:2018}. 
However, interpretability is not simply a property of a class of models, but also of the data, feature engineering, and other factors, and, as such, it may be difficult to achieve in applications~\citep{Lipt:j:2018,Rudi:etal:j:2022}.
Explainability by \textit{post hoc} methods suffers from problems of approximation, since the explainable model is not usually the model responsible for making the predictions.
Recent criticism of \textit{post hoc} explainability suggest such methods should not be adopted for certain medical~\citep{Babi:etal:j:2021} and legal~\citep{ValE:etal:j:2022} applications.
In particular, it appears that such explanations may not be able to satisfy the legislative requirements for avoidance of discrimination, for example~\citep{ValE:etal:j:2022}.

In presentations of the problem of explainability it is usually assumed
there is a (prototypical) human to which a machine is providing the explanation, in the terminology we have used in this paper, this means One-Way Intelligibility for the human.
Even in this limited setting, considering only \emph{techniques} for explanation
ignores other issues, like the role of the explainee,
for whom the explanation is intended, which is contrary to evidence from social science~\citep{Mill:j:2019}.
For instance, the diversity of explainees suggests thinking of explanation
in terms of the understanding of the explainee~\citep{Soko:Flac:x:2021}.
From this standpoint~\citep{Soko:Flac:p:2018,Soko:Flac:x:2021} propose that
explanation should be a
bi-directional process rather than a one-off, one-way delivery of
an explanation to a recipient. Therefore explainability is a process involving reasoning over interpretable insights that are
aligned with  the explainee's background knowledge.
The proposal is that an explainer should allow explainees to interact and rebut explanations; in the proposed framework the entire explanatory process
is based on a reasoning system enabling communication between agents. Such
a framework can be seen as fulfilling many of the requirements in \citep{jmv:human_ml}
for human-centred machine-learning that is intelligible to the (specific) human.

The natural way to view communication between human agents is 
as a dialogue. We have examined the relation of one
such model proposed in \citep{madumal} in Sec. \ref{sec:correct_dial}.
Full dialogue models between agents attempt to characterise multiple kinds of interaction
like questions, explanations, refutations, clarification, argumentation
and the like \citep{agentgames}. 
We are aware of at least two threads of work in the ML literature that
recognises some of the aspects just described, wthout necessarily formulating
it as a dialogue model. In Argument-Based ML, or ABML~\citep{Zabk:etal:c:2006,Mozi:etal:j:2007}, data provided by a human
to an ML engine includes explanations formulated in terms of background knowledge shared between human and machine. These explanations constitute `arguments'
for the label for data instances, and the ML system attempts to construct
hypotheses that are consistent with the explanations. Learning from
explanations was employed by early ML systems like MARVIN \citep{sammut:marvin}
that performed supervised-learning in the original sense of a human guiding
the construction of hypotheses by a machine. The learning protocol employed
in such systems is naturally interactive, involving human-presentation of
data along with explanations (if any), and revision of hypotheses by machine.
The interaction can result in Two-Way Intelligibility if
interactions consist of human refutation of a machine's explanations; and the
machine revises its hypothesis in response. However, no
refutation  of the human's explanation is envisaged within ABML
nor any revision of his or her hypothesis. 

In work on XIL applications to computer vision, where the learner is implemented by a deep network, the key intermediate step of defining ``concepts'' was implemented to better communicate with the human to enable any required revisions~\citep{Stam:etal:p:2021}.\footnote{Such \emph{concept-based} explainability methods have also been applied more widely for deep learning~\citep{Yeh:etal:c:2022}.
For example, concept-based explanations have recently been applied
in an attempt to comprehend the chess knowledge
obtained by AlphaZero, and concluded that this approach did reveal a number of relationships between
learned concepts and historical game play, from standard opening moves to tactical skills, as assessed by
a former world chess champion~\citep{McGr:etal:x:2022}.}

The characterisation  predictions and explanations in XIL is similar
in spirit to the categories defined by the guard functions in this paper.
As with ABML, an XIL system demonstrates Two-Way Intelligibility
when the human provides refutations (in XIL, this is done by
augmenting the data) and the machine revises its hypothesis. Also
in common with ABML, the machine does not
provide refutations for the human's prediction and/or explanation,
and revision of the human's hypothesis is not envisaged. Finally,
although not cast either as ABML or XIL, but nevertheless related to
aspects of both are implementations of incremental
learning designed human-machine collaboration.
An example of using a combination of neural and symbolic learning for collaborative
decision-making systems for medical images \citep{Schm:Finz:p:2020}). We conjecture
that techniques such as this exhibit at least One-Way Intelligibility for
the machine, since it relies on the human providing refutations
and the machine improving its performance as a result.

\section{Concluding Remarks}
\label{sec:concl}
In this paper we have sought to understand intelligibility requirements on interaction
between human and machine-learning systems by abstracting to a broader computational
setting of communication between agents that make predictions and provide explanations.
The paper is thus not about developing particular techniques for prediction or explanation,
but on the identification of some general principles for detecting intelligible
interaction between agents. This abstraction necessarily entails some
model for interaction. Here, we model agents as automata with some special
characteristics, and their interaction follows a communication protocol. Intelligibility is then
defined as a property defined on the result of executing communication the protocol. 
There are some advantages to adopting this approach. First, as with any
mathematical formalisation, we are able to focus on a setting where
aspects of the bigger questions of what is and is not intelligible can be answered unambiguously. Secondly, we are able to use the abstraction to clarify the working of existing implementations.
As an examples of the former, we are able to define concepts of One-Way and Two-Way
Intelligibility between agents purely syntactically from the messages sequences sent
by the corresponding automata, and identify conditions (`Actuation Constraints') under
which this syntactic property would correctly capture some fairly natural
semantic notions of intelligibile interaction between a human and
machine-learning system. As examples of the latter, we show that we are able
to prove that a complex dialogue model for question-answering using natural language can
be mapped to message-sequences in our model; and separately, provide case-studies of
One- and Two-Way Intelligibility in human-in-the-loop implementations that have used
formal logic-based representations for explanations. 

There is broad consensus that
intelligible communication between a machine-learning system and a human
depends on the ML engine, the human and information sent from one to the other; and
a recognition that for complex problems it is important to consider intelligibility to
the machine of information provided by a human.
There is also a long history of formal models for `legal' communication as a
(mathematical) relation between the sender, receiver and the messages exchanged. Surprisingly,
little attention has been paid in bringing
these two aspects together. 
There are of course limits to which an approach purely based on the syntax of
messages--such as the one here--can be used to identify a complex concept like intelligibility.
Some of these can be addressed partially by making the guarded transitions more elaborate,
requiring the $\PEX$-functions to capture the semantics, and extending to a multi-valued logic.
But it is not coincidental that the R's we have identified for detecting
intelligibility --- $\Refute$, $\Revise$, $\Ratify$, and $\Reject$ --- are at
the heart of advancing understandability in Science. We suggest
they may play a similar role in evolving a shared understanding of
data by human-machine systems of the kind proposed in \cite{krenn:nature2022}.
More generally, we envisage  `intelligibility protocols' like
$\PXP$ as being an integral part of 
the design and analysis of Explainable AI (XAI) systems from the ground-up.

\vspace*{0.5cm}
{\scriptsize{
\noindent
{\bf{Acknowledgements:}} AS is a Visiting Professor at Macquarie University, Sydney
and a Visiting Professorial Fellow at UNSW, Sydney. He is also the Class of 1981
Chair Professor at BITS Pilani, Goa,
the Head of the Anuradha and Prashant Palakurthi Centre for AI Research (APPCAIR) at BITS Pilani, and 
TCS Affiliate Professor. 
MB acknowledges support in part by Rich Data Co. Pty and the Australian Government's
Innovations Connections scheme (awards ICG001855 and ICG001858).
 EC is supported by an NHMRC investigator grant.}
 The authors would like to thank  Tirtharaj Dash, Rishabh Khincha,
Soundarya Krishnan, Steve Moyle, Lindsay Peters, and Paul Compton for kindly providing
transcripts of experiments that were used for case studies reported in this paper. AS and MB owe a debt of gratitude to Donald Michie, who shaped much of their
thinking on the views expressed in this paper.
}

\appendix
\normalsize
\section{Formal Model for $\PXP$}
\label{app:lxp}

We formalise $\PXP$ as a protocol for communicating between
$\PEX$ agents modelled as automata. 
Let $S_{mn}(x)$ denote a session between automata $a_m$ and $a_n$
in about a data-instance.\footnote{
There may be multiple sessions between $a_{m}$ and $a_n$ involving
the same data-instance, and we would need an additional index to
capture this. We ignore this here, and also omit $x$ when the
context is obvious.} We adopt the convention that the session
$S_{mn}$ is initiated by $a_m$.

$S_{mn}$ can be represented by the execution of the protocol
which results in a sequence of configurations.
$\langle \gamma_{mn,1},\gamma_{mn,2},\ldots, \gamma_{mn,k} \rangle$.
It is helpful to think of any configuration $\gamma_{mn,i}$ as being
composed of the pair $(\gamma_{m,i},\gamma_{n,i})$.
Here,  $\gamma_{m,i}$ is the (local) configuration of automaton $a_m$
and $\gamma_{n,i}$ is the configuration of $a_n$.

\begin{mydefinition}[Local Configuration]
We define $\gamma_{n,i}$ =  $(s_{n,i},(H_{n,i},D_{n,i}),\mu_{n,i})$. Here the
$s_{\cdot,i}$ is a state;
$H_{\cdot,i}$ is a hypothesis;
$D_{\cdot,i}$ is a set of 4-tuples; 
$\mu_{\cdot,i}$ is the message sent or received.
From the grammar rules,
messages are of the form $+(A,(t,(x,y,e)))$
or $-(A,(t,(x,y,e))$, where $A$ is either $m$ or $n$ (denoting
$a_m$ or $a_n$ for short);
$t$ is a message-tag,
$x$ is a data-instance, $y$ is a label, and $e$ is an explanation.
The corresponding local configuration $\gamma_{n,i}$ is similar.
\end{mydefinition}

\subsection{Guards and Guarded Transitions}

Before defining transitions between configurations, 
we introduce the guards here. The guard $\top$ is 
trivially true in all configurations.
The definitions of non-trivial guards are the
same for all ${\PEX}$ agents, and
we define them here for the receiving automaton ($a_n$).

\begin{mydefinition}[Guards]
 \label{def:guards}
 Let $a_n$ be a ${\PEX}$ agent.
Let $\gamma_{mn,i} = (\gamma_{m,i},\gamma_{n,i})$ be a configuration in
a session $S_{mn}$,
where $\gamma_{m,i} = (s_{m,i},(H_{m,i},D_{m,i}),\mu_{m,i})$ and
$\gamma_{n,i} = (s_{n,i},(H_{n,i},D_{n,i}),\mu_{n,i})$.
Let $\mu_{m,i}$ = $+(n,(t_m,(x,y_m,e_m)))$,
$\mu_{n,i}$ = $-(m,(t_m,(x,y_m,e_m)))$,
$y_n = {\mathtt{PREDICT}}_n(x,H_{n,i})$, and
$e_n = {\mathtt{EXPLAIN}}_n((x,y_n),H_{n,i})$.
Then we define the guards:

\begin{itemize}
    \item[$g_1$:] ${\mathtt{MATCH}}_n(y_n,y_m)$ $\wedge$ ${\mathtt{AGREE}}_n(e_n,e_m)$
    \item[$g_2$:] ${\mathtt{MATCH}}_n(y_n,y_m)$ $\wedge$ $\neg{\mathtt{AGREE}}_n(e_n,e_m)$
    \item[$g_3$:] $\neg{\mathtt{MATCH}}_n(y_n,y_m)$ $\wedge$ ${\mathtt{AGREE}}_n(e_n,e_m)$
    \item[$g_4$:] $\neg {\mathtt{MATCH}}_n(y_n,y_m)$ $\wedge$ $\neg {\mathtt{AGREE}}_n(e_n,e_m)$
\end{itemize}
\end{mydefinition}

\begin{myremark}
\label{rem:guards}
We note the following: 
\begin{itemize}
    \item[(a)] At most one of the four guards can be
            true in a configuration $\gamma_{mn,i}$.
    \item[(b)] The automaton in the ${\mathtt{CAN\_SEND}}$
        state in $\gamma_{mn,i}$ (here $a_n$) now checks the guards to decide on the message-tag;
    \item[(c)]  The guards for all automata use only the {$\PEX$} functions ${\mathtt{MATCH}}$
            and ${\mathtt{AGREE}}$. However, since these functions
            are automaton-specific, the value of the guard
            function in one automaton may or may not agree with the corresponding value in
            a different automaton. This can result in complex interaction patterns, some
            of which can be counter-intuitive. In this paper, we will mainly
            restrict ourselves to {\em compatible\/} automata. As will
            be seen below, this results in a substantial simplification in the set
            of messages possible. 
        
\end{itemize}
\end{myremark}

To address (c) we focus on the special case where a pair
of $\PEX$ agents that agree with each other on their ${\mathtt{MATCH}}$ and
${\mathtt{AGREE}}$ functions within a session.

\begin{mydefinition}[Compatible Automata]
Let $S_{mn}$ be a session between ${\PEX}$ agents $a_m$ and $a_n$.
Let $Y_m = \{y_m: +(n,(\cdot,(x,y_m,\cdot))\}$ be the set of predictions in
messages sent by $a_m$ to $a_n$ and
$Y_n = \{y_n: +(m,(\cdot,(x,y_n,\cdot))\}$ be the set of
messages sent by $a_n$ to $a_m$.
Let $E_m = \{e_m: +(n,(\cdot,(x,\cdot,e_m))\}$ be the set of explanations in
messages sent by $a_m$ to $a_n$ and
$E_n = \{e_n: +(m,(\cdot,(x,\cdot,e_n)))\}$ be the set of explanations in 
messages sent by $a_n$ to $a_m$.
We will say there is a functional agreement on predictions between
$a_m$ and $a_n$ in $S_{mn}$, or
$a_m~\simeq_y a_n$ in $S_{mn}$, 
if for all $y_m \in Y_m$ and $y_n \in Y_n$,
${\mathtt{MATCH}}_m(y_m,y_n)$ $=$ ${\mathtt{MATCH}}_n(y_n,y_m)$.
Similarly we will say there is a functional agreement on explanations between
$a_m$ and $a_n$ in $S_{mn}$, or $a_m \simeq_e a_n$ in $S_{mn}$, if for all $e_m \in E_m$
and $e_n \in E_n$ 
${\mathtt{AGREE}}_m(e_m,e_n)$ $=$ ${\mathtt{AGREE}}_n(e_n,e_m)$.
We will say automata $a_m$ and $a_n$ are compatible in session $S_{mn}$
iff $a_m \simeq_y a_n$ and $a_m \simeq_e a_n$ in $S_{mn}$.
\end{mydefinition}
\noindent
We assume that the oracle $\Delta$ is compatible with any $\PEX$ agent. 

\subsection{Interaction Between Automata}

Let us assume $a_m$ sends a message $\mu$ to $a_n$.
 The response from $a_n$ is determined by the definition of
 a guarded transition relation.
Each element of this relation contains a guard $g$.
Intuitively, $a_n$ performs a computation $\Pi$ on its current configuration
$\gamma_n$; evaluates $g$; and then sends a response $\mu^\prime$ to $a_m$.
That is, the transition relation can be specified as a set
of 4-tuples $(\gamma,\Pi,g,\gamma^\prime)$, where $\gamma,\gamma^\prime$ are
global configurations, consisting of local configurations for $a_m$ and $a_n$.
($\gamma$ = $(\gamma_m,\gamma_n)$ and $\gamma^\prime$ =
$(\gamma^\prime_m,\gamma^\prime_n)$).

The message-tag in $\mu^\prime$ associated with an element in the transition
relation are obtained from the following categorisation:

{\small{
\begin{center}
\begin{tabular}{cc|c|c|c}
&\multicolumn{1}{c}{\mbox{}}&\multicolumn{2}{c}{Explanation} & \\
&\multicolumn{1}{c}{\mbox{}}&\multicolumn{1}{c}{ ${\mathtt{AGREE}}(e_n,e_m)$}&\multicolumn{1}{c}{$\neg {\mathtt{AGREE}}(e_n,e_m)$} & \\ \cline{3-4}
& ${\mathtt{MATCH}}(y_n,y_m)$ & $\Ratify$ & $\Refute$ or & \\
Prediction & & (A) & $\Revise$ (B) & \\ \cline{3-4}
&  $\neg {\mathtt{MATCH}}(y_n,y_m)$ & $\Refute$ or & $\Reject$ &  \\
& & $\Revise$ (C) & (D) & \\ \cline{3-4}
\end{tabular}
\end{center}
}}

Note that in category (A), guard $g_1$ (see Def. ~\ref{def:guards}) will be true; 
in category (B), guard $g_2$ will be true;
in category (C), guard $g_3$ will be true; and
in category (D), guard $g_4$ will be true. In
addition, a further test ($g^\prime$ in Def. ~\ref{def:trans}) below will be used in (B) and (C) 
to decide on whether a $Refute$ or $Revise$ message-tag is sent.

\begin{mydefinition}[Guarded Transition Relation]
\label{def:trans}
Let $S_{mn}$ be a session between 
automata $a_m$ and $a_n$, where $a_m$ sends a message
to $a_n$.
The transition relation for $S_{mn}$ is the
 set of 4-tuples $(\gamma,\Pi,g,\gamma^\prime)$.
 Let $\gamma$ = $((s_m,(H_m,D_m),+(n,\mu)),(s_n,(H_n,D_n),-(m,\mu)))$
and $\gamma^\prime$ = $(s^\prime_m,((H_m,D_m),-(n,\mu^\prime)),\allowbreak (s^\prime_n,(H^\prime_n,D^\prime_n),+(m,\mu^\prime)))$, where
$s_m = {\mathtt{CAN\_RECEIVE}}$,
$s_n = {\mathtt{CAN\_SEND}}$;
$s^\prime_m = {\mathtt{CAN\_SEND}}$,
$s^\prime_n = {\mathtt{CAN\_RECEIVE}}$.
Let $\Pi$ =
($D^\prime_n:=D_n \cup \{(x,y_m,e_m,m)\})~;~P~;$ $~y_n:= {\mathtt{PREDICT}}(x,H_{n})~;~$
$e_n:= {\mathtt{EXPLAIN}}((x,y_n),H_{n})~;~
y^\prime_n:={\mathtt{PREDICT}}(x,H^\prime_n)~;~e^\prime_n:=
{\mathtt{EXPLAIN}}((x,y^\prime_n),H^\prime_n)$).
Let $g^\prime$ =
${\mathtt{MATCH}}(y^\prime_n,y_m)~\allowbreak \wedge~{\mathtt{AGREE}}(e^\prime_n,e_m)$.

For compactness, we only tabulate $\mu$, $P$, $g$, and $\mu^\prime$.
This is shown in Table \ref{tab:allgt}.
\end{mydefinition}

{\small{
\begin{table}
\begin{center}
  \begin{tabular}{|l|l|l|c|l|} \hline
Trans& \multicolumn{1}{|c|}{$\mu$ (received by $a_n$)} & \multicolumn{1}{|c|}{$P$} & {$g$} & \multicolumn{1}{|c|}{$\mu^\prime$ (sent by $a_n$)}\\ \hline
0. & No message & $H^\prime_n := H_n$& $\top$ & $(Init,(x,y^\prime_n,e^\prime_n))$\\
\hline
1. & $(Init, (x,y_m,e_m))$  & $H^\prime_n := H_n$  & $ g_1 $ & $(Ratify, ((x,y^\prime_n,e^\prime_n))$  \\
\hline
2. & $(Init, (x,y_m,e_m))$  & $H^\prime_n := {\mathtt{LEARN}}(H_n,D^\prime_n)$  & $ g_2~\wedge~\neg g^\prime $ & $(Refute, ((x,y^\prime_n,e^\prime_n))$  \\
\hline
3. & $(Init, (x,y_m,e_m))$  & $H^\prime_n := {\mathtt{LEARN}}(H_n,D^\prime_n)$  & $ g_2~\wedge~g^\prime $ & $(Revise, ((x,y^\prime_n,e^\prime_n))$  \\
\hline
4. & $(Init, (x,y_m,e_m))$  & $H^\prime_n := {\mathtt{LEARN}}(H_n,D^\prime_n)$  & $ g_3~\wedge~\neg g^\prime $ & $(Refute, ((x,y^\prime_n,e^\prime_n))$  \\
\hline
5. & $(Init, (x,y_m,e_m))$  & $H^\prime_n := {\mathtt{LEARN}}(H_n,D^\prime_n)$  & $ g_3~\wedge~g^\prime $ & $(Revise, ((x,y^\prime_n,e^\prime_n))$  \\
\hline
6. & $(Init, (x,y_m,e_m))$  & $H^\prime_n := H_n$  & $ g_4 $ & $(Reject, ((x,y^\prime_n,e^\prime_n))$  \\
\hline
7. & $(Ratify, (x,y_m,e_m))$  & $H^\prime_n := H_n$  & $ g_1 $ & $(Ratify, ((x,y^\prime_n,e^\prime_n))$  \\
\hline
8. & $(Ratify, (x,y_m,e_m))$  & $H^\prime_n := {\mathtt{LEARN}}(H_n,D^\prime_n)$  & $ g_2~\wedge~\neg g^\prime $ & $(Refute, ((x,y^\prime_n,e^\prime_n))$  \\
\hline
9. & $(Ratify, (x,y_m,e_m))$  & $H^\prime_n := {\mathtt{LEARN}}(H_n,D^\prime_n)$  & $ g_2~\wedge~g^\prime $ & $(Revise, ((x,y^\prime_n,e^\prime_n))$  \\
\hline
10. & $(Ratify, (x,y_m,e_m))$  & $H^\prime_n := {\mathtt{LEARN}}(H_n,D^\prime_n)$  & $ g_3~\wedge~\neg g^\prime $ & $(Refute, ((x,y^\prime_n,e^\prime_n))$  \\
\hline
11. & $(Ratify, (x,y_m,e_m))$  & $H^\prime_n := {\mathtt{LEARN}}(H_n,D^\prime_n)$  & $ g_3~\wedge~g^\prime $ & $(Revise, ((x,y^\prime_n,e^\prime_n))$  \\
\hline
12. & $(Ratify, (x,y_m,e_m))$  & $H^\prime_n := H_n$  & $ g_4 $ & $(Reject, ((x,y^\prime_n,e^\prime_n))$  \\
\hline
13. & $(Refute, (x,y_m,e_m))$  & $H^\prime_n := H_n$  & $ g_1 $ & $(Ratify, ((x,y^\prime_n,e^\prime_n))$  \\
\hline
14. & $(Refute, (x,y_m,e_m))$  & $H^\prime_n := {\mathtt{LEARN}}(H_n,D^\prime_n)$  & $ g_2~\wedge~\neg g^\prime $ & $(Refute, ((x,y^\prime_n,e^\prime_n))$  \\
\hline
15. & $(Refute, (x,y_m,e_m))$  & $H^\prime_n := {\mathtt{LEARN}}(H_n,D^\prime_n)$  & $ g_2~\wedge~g^\prime $ & $(Revise, ((x,y^\prime_n,e^\prime_n))$  \\
\hline
16. & $(Refute, (x,y_m,e_m))$  & $H^\prime_n := {\mathtt{LEARN}}(H_n,D^\prime_n)$  & $ g_3~\wedge~\neg g^\prime $ & $(Refute, ((x,y^\prime_n,e^\prime_n))$  \\
\hline
17. & $(Refute, (x,y_m,e_m))$  & $H^\prime_n := {\mathtt{LEARN}}(H_n,D^\prime_n)$  & $ g_3~\wedge~g^\prime $ & $(Revise, ((x,y^\prime_n,e^\prime_n))$  \\
\hline
18. & $(Refute, (x,y_m,e_m))$  & $H^\prime_n := H_n$  & $ g_4 $ & $(Reject, ((x,y^\prime_n,e^\prime_n))$  \\
\hline
19. & $(Revise, (x,y_m,e_m))$  & $H^\prime_n := H_n$  & $ g_1 $ & $(Ratify, ((x,y^\prime_n,e^\prime_n))$  \\
\hline
20. & $(Revise, (x,y_m,e_m))$  & $H^\prime_n := {\mathtt{LEARN}}(H_n,D^\prime_n)$  & $ g_2~\wedge~\neg g^\prime $ & $(Refute, ((x,y^\prime_n,e^\prime_n))$  \\
\hline
21. & $(Revise, (x,y_m,e_m))$  & $H^\prime_n := {\mathtt{LEARN}}(H_n,D^\prime_n)$  & $ g_2~\wedge~g^\prime $ & $(Revise, ((x,y^\prime_n,e^\prime_n))$  \\
\hline
22. & $(Revise, (x,y_m,e_m))$  & $H^\prime_n := {\mathtt{LEARN}}(H_n,D^\prime_n)$  & $ g_3~\wedge~\neg g^\prime $ & $(Refute, ((x,y^\prime_n,e^\prime_n))$  \\
\hline
23. & $(Revise, (x,y_m,e_m))$  & $H^\prime_n := {\mathtt{LEARN}}(H_n,D^\prime_n)$  & $ g_3~\wedge~g^\prime $ & $(Revise, ((x,y^\prime_n,e^\prime_n))$  \\
\hline
24. & $(Revise, (x,y_m,e_m))$  & $H^\prime_n := H_n$  & $ g_4 $ & $(Reject, ((x,y^\prime_n,e^\prime_n))$  \\
\hline
25. & $(Reject, (x,y_m,e_m))$  & $H^\prime_n := H_n$  & $ g_1 $ & $(Ratify, ((x,y^\prime_n,e^\prime_n))$  \\
\hline
26. & $(Reject, (x,y_m,e_m))$  & $H^\prime_n := {\mathtt{LEARN}}(H_n,D^\prime_n)$  & $ g_2~\wedge~\neg g^\prime $ & $(Refute, ((x,y^\prime_n,e^\prime_n))$  \\
\hline
27. & $(Reject, (x,y_m,e_m))$  & $H^\prime_n := {\mathtt{LEARN}}(H_n,D^\prime_n)$  & $ g_2~\wedge~g^\prime $ & $(Revise, ((x,y^\prime_n,e^\prime_n))$  \\
\hline
28. & $(Reject, (x,y_m,e_m))$  & $H^\prime_n := {\mathtt{LEARN}}(H_n,D^\prime_n)$  & $ g_3~\wedge~\neg g^\prime $ & $(Refute, ((x,y^\prime_n,e^\prime_n))$  \\
\hline
29. & $(Reject, (x,y_m,e_m))$  & $H^\prime_n := {\mathtt{LEARN}}(H_n,D^\prime_n)$  & $ g_3~\wedge~g^\prime $ & $(Revise, ((x,y^\prime_n,e^\prime_n))$  \\
\hline
30. & $(Reject, (x,y_m,e_m))$  & $H^\prime_n := H_n$  & $ g_4 $ & $(Reject, ((x,y^\prime_n,e^\prime_n))$  \\
\hline
31.& $(Ratify,(x,y_m,e_m))$ &$H^\prime_n := H_n$ & $\top$ &$(Term,(x,y^\prime_n,e^\prime_n))$\\
\hline
 32.& $(Refute,(x,y_m,e_m))$ &$H^\prime_n := H_n$ & $\top$ &$(Term,(x,y^\prime_n,e^\prime_n))$\\
\hline
 33.& $(Revise,(x,y_m,e_m))$ &$H^\prime_n := H_n$ & $\top$ &$(Term,(x,y^\prime_n,e^\prime_n))$\\
\hline
 34.& $(Reject,(x,y_m,e_m))$ &$H^\prime_n := H_n$ & $\top$ &$(Term,(x,y^\prime_n,e^\prime_n))$\\
 \hline
  35.& $(Init,(x,y_m,e_m))$ &$H^\prime_n := H_n$ & $\top$ &$(Term,(x,y^\prime_n,e^\prime_n))$\\
 \hline
  36.& $(Init,(x,?,?))$ &$H^\prime_n := H_n$ & $\top$ &$(Refute,(x,y^\prime_n,e^\prime_n))$\\
 \hline
  37.& $(Init,(x,y,?))$ &$H^\prime_n := H_n$ & $\top$ &$(Refute,(x,y^\prime_n,e^\prime_n))$\\
 \hline
  38.& $(Init,(x,?,e_m))$ &$H^\prime_n := H_n$ & $\top$ &$(Refute,(x,y^\prime_n,e^\prime_n))$\\
 \hline
  39.& $(Term,(x,y_m,e_m))$ &$H^\prime_n := {\mathtt{LEARN}}(H_n,D^\prime_n)$ & $\top$ & No message \\ 
\hline
40. & No message & $H^\prime_n := H_n$& $\top$ & $(Term,(x,y^\prime_n,e^\prime_n))$\\
 \hline 
 \end{tabular}
\end{center}
\caption{Elements in the set comprising the guarded transition relation.}
\label{tab:allgt}
\end{table} 
}}
\subsubsection{The Special Case of Compatible Automata}
\begin{myproposition}
\label{prop:illegal}
Let $\psi$ be an execution of the $\PXP$ protocol in a
session between compatible agents $a_m$ and $a_n$. 
\end{myproposition}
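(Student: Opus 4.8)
The plan is to show that compatibility rules out a specific family of rows of Table~\ref{tab:allgt} --- the ``illegal'' transitions --- and that once those rows are deleted the resulting message graph has no directed cycle other than the self-loops at $\Refute$, $\Ratify$ and $\Reject$ (which is exactly the acyclicity claim the passage to $\PXPk$ relies on).

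First I would establish a bookkeeping invariant for a session $S_{mn}$. Because messages strictly alternate between $a_m$ and $a_n$, between the moment an agent sends a message and the moment it next acts, its own hypothesis is untouched; and inspecting Table~\ref{tab:allgt}, every transition that emits $\Ratify$ or $\Reject$ leaves the hypothesis unchanged, while every transition that emits $\Refute$ or $\Revise$ (apart from the `?'-handling rows $36$--$38$) emits the prediction/explanation computed from the \emph{updated} hypothesis $H'_n$. Hence: whenever an agent receives a message that is a reply to its own previous message, the pair $\big(\mathtt{PREDICT}(x,H),\mathtt{EXPLAIN}((x,\cdot),H)\big)$ evaluated at its current hypothesis $H$ equals the pair it put in that previous message. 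I expect this to be the step needing the most care, because it has to be checked against each kind of transition (in particular the learning transitions, where one must verify the emitted pair is the post-learning one), and one must note it is vacuous for the agent that is merely the first recipient of an $\Init$.

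Next I would do a case analysis on the tag $t_m$ received by $a_n$, combining the invariant, the guard definitions (Def.~\ref{def:guards}), the auxiliary test $g'$ (Def.~\ref{def:trans}), and the compatibility identities $\mathtt{MATCH}_m(\cdot,\cdot)=\mathtt{MATCH}_n(\cdot,\cdot)$ and $\mathtt{AGREE}_m(\cdot,\cdot)=\mathtt{AGREE}_n(\cdot,\cdot)$ on the predictions/explanations exchanged in $S_{mn}$. If $t_m=\Ratify$, then $a_m$ fired under $g_1$ without learning, and since it transmitted its own current pair, the invariant plus compatibility force $g_1$ for $a_n$, so $a_n$ can only take transition $7$ or $31$; rows $8$--$12$ are impossible. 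If $t_m=\Revise$, then $a_m$ fired under a guard whose conjunct is $g'$, and since the message carries $a_m$'s post-learning pair, $g'$ translates via compatibility into $g_1$ for $a_n$, killing rows $20$--$24$. If $t_m=\Reject$, then $a_m$ fired under $g_4$ without learning, so $g_4$ holds for $a_n$, killing rows $25$--$29$. If $t_m=\Refute$ (non-`?'), then $a_m$ fired under $g_2\wedge\neg g'$ or $g_3\wedge\neg g'$, and because the message carries the post-learning pair, $\neg g'$ becomes $\neg g_1$ for $a_n$, so row $13$ is impossible. An incoming $\Init$ (including the `?' cases) imposes no constraint, and an incoming $\Term$ only enables row $39$. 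Thus precisely the rows $\{8,9,10,11,12,13,20,21,22,23,24,25,26,27,28,29\}$ are eliminated.

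Finally I would read off the reachability of the surviving edges on the tags $\{\Init,\Ratify,\Refute,\Revise,\Reject,\Term\}$: $\Init$ can reach every tag; $\Ratify$ reaches only $\Ratify$ (self-loop) and $\Term$; $\Revise$ reaches only $\Ratify$ and $\Term$; $\Reject$ reaches only $\Reject$ (self-loop) and $\Term$; $\Refute$ reaches $\Refute$ (self-loop), $\Revise$, $\Reject$ and $\Term$; and $\Term$ is a sink. Collapsing the three self-loops yields the strict partial order $\Init\prec\Refute\prec\Revise\prec\Ratify$ together with $\Refute\prec\Reject$, all strictly below $\Term$, which is acyclic; hence the only cycles in the compatible-agent message graph are the self-loops at $\Refute$, $\Ratify$ and $\Reject$, which is what the statement asserts and what makes bounding each self-loop by $k$ suffice for bounded sessions.
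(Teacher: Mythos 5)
Your core argument for the rows the proposition actually asserts --- 8--12, 20--24 and 25--29 --- is the same as the paper's: the sender of $\Ratify$/$\Revise$/$\Reject$ fired under a guard ($g_1$, $\cdot\wedge g^\prime$, $g_4$ respectively) evaluated on exactly the prediction--explanation pair it transmits, the receiver's own pair is unchanged since its last message, and compatibility transfers the truth values of $\mathtt{MATCH}$/$\mathtt{AGREE}$ across the two agents, forcing the receiver into $g_1$, $g_1$, $g_4$ respectively and thereby falsifying the guards of the listed rows. Your explicit ``bookkeeping invariant'' is precisely the step the paper compresses into the phrase ``since there is no change in the hypothesis of the agent $m$ and this is a collaborative session,'' so naming it is a presentational improvement rather than a different route.

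However, your additional elimination of row 13 is not sound, and consequently your summary ``precisely the rows $\{8,\ldots,13,\ldots,29\}$ are eliminated'' and your reachability table (in which $\Refute$ does not reach $\Ratify$) do not describe the compatible-agent message graph of Figure \ref{fig:mgraph}(b). A $\Refute$ can also be produced by rows 36--38, in reply to an $\Init$ whose prediction and/or explanation is the unknown marker {\tt ?}; those rows fire under the trivial guard $\top$ with no learning, so nothing forces $\neg g_1$ at the receiver, which may then legitimately answer via row 13 with $\Ratify$. The paper relies on exactly this path: the transition sequence $(0,36/37/38,13,39)$ appears in Table \ref{tab:madcompare}. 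The slip is harmless for the proposition itself and for the acyclicity conclusion (the surviving edge from $\Refute$ to $\Ratify$ creates no cycle, since $\Ratify$ reaches only itself and $\Term$), but the eliminated set should be exactly the fifteen rows the proposition lists, not sixteen.
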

\begin{itemize}
\item The transitions correspond to rows 8,9,10,11 and 12 will not occur in $\psi$.
\item The transitions correspond to rows 20,21,22,23 and 24 will not occur in $\psi$.
\item The transitions correspond to rows 25,26,27,28 and 29 will not occur in $\psi$. 
\end{itemize}
\begin{proof} 
We assume each of the above transitions represents a communication of the message 
$\mu^\prime=(t^\prime, (x,y^\prime_n,e^\prime_n))$. Without loss of generality, we 
assume that $n$ is the sender and $m$ is the receiver of this communication. 
We observe that the message tag in the above communication is different from {\tt{Init}}.
So it is immediately preceded by other transitions.
These preceding transitions would represent communications in which the agent 
$m$ sends the message $\mu=(t,(x,y_m,e_n))$ and the agent $n$ receives it. 
Let $H_m$ and $H_n$ be the hypotheses of  $m$ and $n$ before communicating the 
message $\mu$, $H^\prime_m$ and $H^\prime_n$ be the hypotheses of $m$ and $n$ 
after communicating $\mu^\prime$. $H^\prime_n$ be the updated hypothesis of $n$ 
after receiving the message $m$. Also observe that $y_m:= {\mathtt{PREDICT}}(x,H_{m})$,
$e_m:= {\mathtt{EXPLAIN}}((x,y_m),H_{m})$, $y^\prime_n:={\mathtt{PREDICT}}(x,H^\prime_n)$
and $~e^\prime_n:={\mathtt{EXPLAIN}}((x,y^\prime_n),H^\prime_n)$.

\begin{center}
\begin{tikzpicture}[->,>=stealth',shorten >=1pt,auto,node distance=0.8cm,semithick,inner sep=3pt]
\node[rectangle split, rectangle split parts=2] (A) {$H_m,y_m,e_m$ \nodepart{second} $H_n,y_n,e_n$};
\node (1) [right=of A]{};
\node (2) [right=of 1]{};
\node[rectangle split, rectangle split parts=2]  (B) [right=of 2] {$H_m,y_m,e_m$ \nodepart{second}$H^\prime_n,y^\prime_n,e^\prime_n$};
\node (3) [right=of B] {};
\node (4) [right=of 3] {};
\node[rectangle split, rectangle split parts=2]  (C) [right=of 4] {$H^\prime_m,y^\prime_m,e^\prime_m$ \nodepart{second}$H^\prime_n,y^\prime_n,e^\prime_n$};
\path (A) edge node[above]{$m$ sends $\mu$} node[below]{$n$ receives $\mu$} (B);
\path (B) edge node[above]{$n$ sends $\mu^\prime$} node[below]{$m$ receives $\mu^\prime$} (C);
\node[rectangle split, rectangle split parts=3] (D) [below=of 2] {$n$ checks the guards using
\nodepart{second}$\mathtt{MATCH}_n(y_n,y_m)$, ${\mathtt{AGREE}}_n(e_n,e_m)$
\nodepart{third} $\mathtt{MATCH}_n(y^\prime_n,y_m)$,  ${\mathtt{AGREE}}_n(e^\prime_n,e_m)$};
\node[rectangle split, rectangle split parts=3] (E) [below=of 4] {$m$ checks the guards using
\nodepart{second}$\mathtt{MATCH}_m(y_m,y^\prime_n)$, ${\mathtt{AGREE}}_m(e_m,e^\prime_n)$
\nodepart{third} $\mathtt{MATCH}_m(y^\prime_m,y^\prime_n)$,  ${\mathtt{AGREE}}_m(e^\prime_m,e^\prime_n)$};
\end{tikzpicture}
\end{center}

\begin{itemize}
\item Let us consider first the transitions 8,9,10,11 and 12 for communicating $\mu^\prime$. 
It is clear that the message tag received in each of these transitions  is {\tt{Ratify}}. 
It means the previous transition for communicating $\mu$ should be any one of 1,7,13,19 
and 25 and all of them have guard $g_1= \mathtt{MATCH}_n(y_n,y_m) \land {\mathtt{AGREE}}_n(e_n,e_m)$ 
is true. Since there is no change in the hypothesis of the agent $m$ and this is a 
collaborative session, the guard $\mathtt{MATCH}_m(y_m,y_n) 
\land {\mathtt{AGREE}}_m(e_m,e_n)$ is also true. Hence the guards for 8,9,10,11 
and 12 are all false and these transitions will never occur in any collaborative 
session using the {$\PXP$} protocol.

\item Now let us consider the transitions 20,21,22,23 and 24. It is clear that
the message tag $t$ in each of these transitions is {\tt{Revise}}. It means the 
previous transition should be any one of 3,5,9,11,15,17,21,23,27 and 29. In all theses
transitions, the guard $g'$ (which is $\mathtt{MATCH}_n(y^\prime_n,y_m) \land {\mathtt{AGREE}}_n(e^\prime_n,e_m))$)
is true. Since this is a collaborative session, the guard $\mathtt{MATCH}_m(y_m,y^\prime_n) 
\land {\mathtt{AGREE}}_m(e_m,e^\prime_n)$ is also true. Hence the guards for 20,21,22,23 
and 24 are all false and these transitions will never occur in any collaborative 
session using the {$\PXP$} protocol.

\item Finally, let us consider the transitions 25,26,27,28 and 29. It is clear that
the message tag $t$ in each of these transitions is {\tt{Reject}}. It means the 
previous transition should be any one of 6,12,18,24 and 30. Also there is no change in 
the hypothesis of the agent $m$ and the previous 
transition's guard $\lnot\mathtt{MATCH}_n(y_n,y_m) \land \lnot{\mathtt{AGREE}}_n(e_n,e_m)$ 
is true. Since this is a collaborative session, the guard $\lnot\mathtt{MATCH}_m(y_m,y_n) 
\land \lnot{\mathtt{AGREE}}_m(e_m,e_n)$ is also true. Hence the guards for 25,26,27,28 
and 29 are all false and these transitions will never occur in any collaborative 
session using the {$\PXP$} protocol.

\qed
\end{itemize}
\end{proof}

\begin{myremark}
\label{rem:termloop}
Let $S_{mn}$ be a session between compatible automata $a_m$ and $a_n$ consisting of a
sequence of configurations $\langle \gamma_1,\gamma_2,\ldots, \gamma_k \rangle$,
where $\gamma_i = (\gamma_{m,i},\gamma_{n,i})$. Let
$\gamma_{m,i} = (\cdot,\cdot,\mu_{m,i})$,
$\gamma_{n,i} = (\cdot,\cdot,\mu_{n,i})$ where
$\mu_{m,1}$ = $+(n,(Init,\cdot))$ and $\mu_{\cdot, k}$ = $+(\cdot,(Term,\cdot))$.
\begin{itemize}
\item If there is an $i$ such that the message tag in $\mu_{m,i}$ or $\mu_{n,i}$ is 
$\tt{Ratify}$, then for each $j$ such that $i<j<k$, $r_j=r_{j-1}$.  
\item If there is an $i$ such that the message tag in $\mu_{m,i}$ or $\mu_{n,i}$ is 
$\tt{Reject}$, then for each $j$ such that $i<j<k$, $r_j=r_{j-1}$.  
\end{itemize}
\end{myremark}

\begin{myremark}
There are only four transitions are enabled with `?' in the message: 
\begin{itemize}
\item The transition in which an agent initiate a session with `?'  as a prediction and/or an explanation.  For this, 
the transition in the 0th row (which is enabled as the guard $g$ is true) is used.   
\item The transition in which an agent responds to the message which has `?'. For this, the transitions in the 36th row, 37th row and 39th row (which are enabled as the guard $g$ is true) are used.   
\end{itemize}
As a result of the above two observations, if an agent initiates a session 
with the messages $(Init,(x,`?',`?'))$, $(Init,(x,`?',e))$, and $(Init,(x,y,`?')$, 
then the agent will receive the message $(Refute,(x,y',e'))$. Here $y',e'$  may be 
different from $y,e$ respectively. 
\end{myremark}

\noindent
We can construct an abstraction of the set of transitions
in the form of a `message-graph'.
Vertices in the message-graph represent messages sent
or received, and edges are transitions. 
The message graph for the set of transitions from Table \ref{tab:allgt}
is given
in Figure \ref{fig:mgraph}(a): for simplicity, vertex
labels are just the message tags, and edge-labels refer
to the row numbers in the Table \ref{tab:allgt}.\footnote{
Informally, for an edge $(v_1,v_2)$ in the graph, the label for $v_1$ is the message-tag
received, and the label for $v_2$ is the message-tag sent. The edges are labelled
with the corresponding transition entries in the table in Definition \ref{def:trans}.
Correctly, the edge-label should be distinguish between which of $a_n$ or
$a_m$ is sending, and which is receiving along with the message content.
This level of detail is not needed for what follows.} It is evident
from the cycles and self-loops in the graph that
communication can become unbounded. But when we 
restrict to compatible automata, transitions from mentioned
in Proposition \ref{prop:illegal} are not allowed. 
The message graph for
the remaining set of transitions is given in  Figure \ref{fig:mgraph}(b). 
All non-trivial cycles (other than self loops) 
got removed in this message graph. Still the interaction may be unbounded due to
the self-loops. To redress this, we alter
the ${\PXP}$ protocol by replacing transitions
encoding self-loops. 
We will call the modified protocol $\PXPk$.
The corresponding message-graph is in Figure \ref{fig:mgraph}(c).

\begin{mydefinition}[$\PXPk$]
\label{def:lxpstar}
We rename transitions $7,14,16$ and $30$ as ${7}-k,{14}-k,{16}-k$ and ${30}-k$ respectively to denote that at most $k$
occurrences of the transition can occur on any execution; and add the transitions
${7}^\prime, {14}^\prime,{16}^\prime$ and ${30}^\prime$ to allow termination after $k$ iterations. The modified
set of transitions are shown below.

{\scriptsize{
\begin{center}
    \begin{tabular}{|l|c|l|c|c|} \hline
    S.No. & $\mu$ & \multicolumn{1}{|c|}{$P$} & $g$ & $\mu^\prime$ \\ \hline
$7^\prime$. & $(Ratify, (x,y_m,e_m))$  & $H^\prime_n := H_n$  & $g_1$  & $(Term, (x,y^\prime_n,e^\prime_n))$  \\
\hline
$7$-k. & $(Ratify, (x,y_m,e_m))$  & $H^\prime_n := H_n$  & $g_1$  & $(Ratify, (x,y^\prime_n,e^\prime_n))$  \\
\hline$
{14}^\prime$. & $(Refute, (x,y_m,e_m))$  & $H^\prime_n := {\mathtt{LEARN}}(H_n,D^\prime_n)$  & $g_2~\wedge~\neg g^\prime $ & $(Term, (x,y^\prime_n,e^\prime_n))$  \\
\hline
$14$-k. & $(Refute, (x,y_m,e_m))$  & $H^\prime_n :=  {\mathtt{LEARN}}(H_n,D^\prime_n)$  & $g_2~\wedge~\neg g^\prime $ & $(Refute, (x,y^\prime_n,e^\prime_n))$  \\
\hline
 ${16}^\prime$. & $(Refute, (x,y_m,e_m))$  & $H^\prime_n := {\mathtt{LEARN}}(H_n,D^\prime_n)$  & $g_3~\wedge~\neg g^\prime $ & $(Term, (x,y_n,e_n))$  \\
\hline
$16$-k. & $(Refute, (x,y_m,e_m))$  & $H^\prime_n := {\mathtt{LEARN}}(H_n,D^\prime_n)$  & $g_3~\wedge~\neg g^\prime $ & $(Refute, (x,y^\prime_n,e^\prime_n))$  \\
\hline
${30}^\prime$. & $(Reject, (x,y_m,e_m))$  & $H^\prime_n := H_n$  & $g_4$ & $(Term, (x,y^\prime_n,e^\prime_n))$  \\
\hline
$30$-k. & $(Reject, (x,y_m,e_m))$  & $H^\prime_n := H_n$  & $g_4$  & $(Reject, (x,y^\prime_n,e^\prime_n))$  \\
\hline\end{tabular}
\end{center}
}}
\end{mydefinition}

\begin{figure}
    \centering
    \includegraphics[height=8cm]{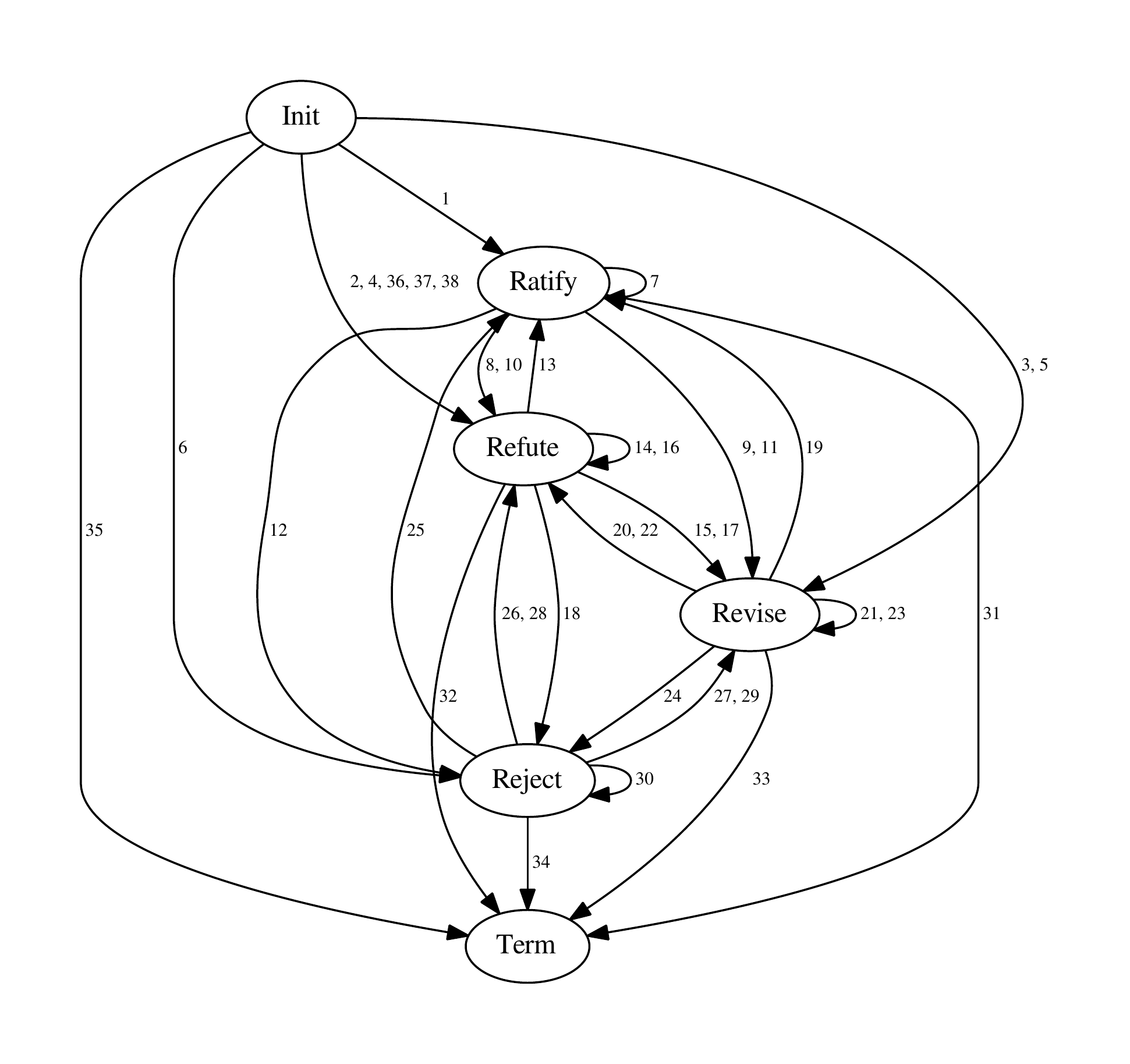}
    \begin{center}
        (a)
    \end{center}
    \vspace*{-.3in}
    \begin{minipage}{0.45\textwidth}
    \centering
    \includegraphics[height=8cm]{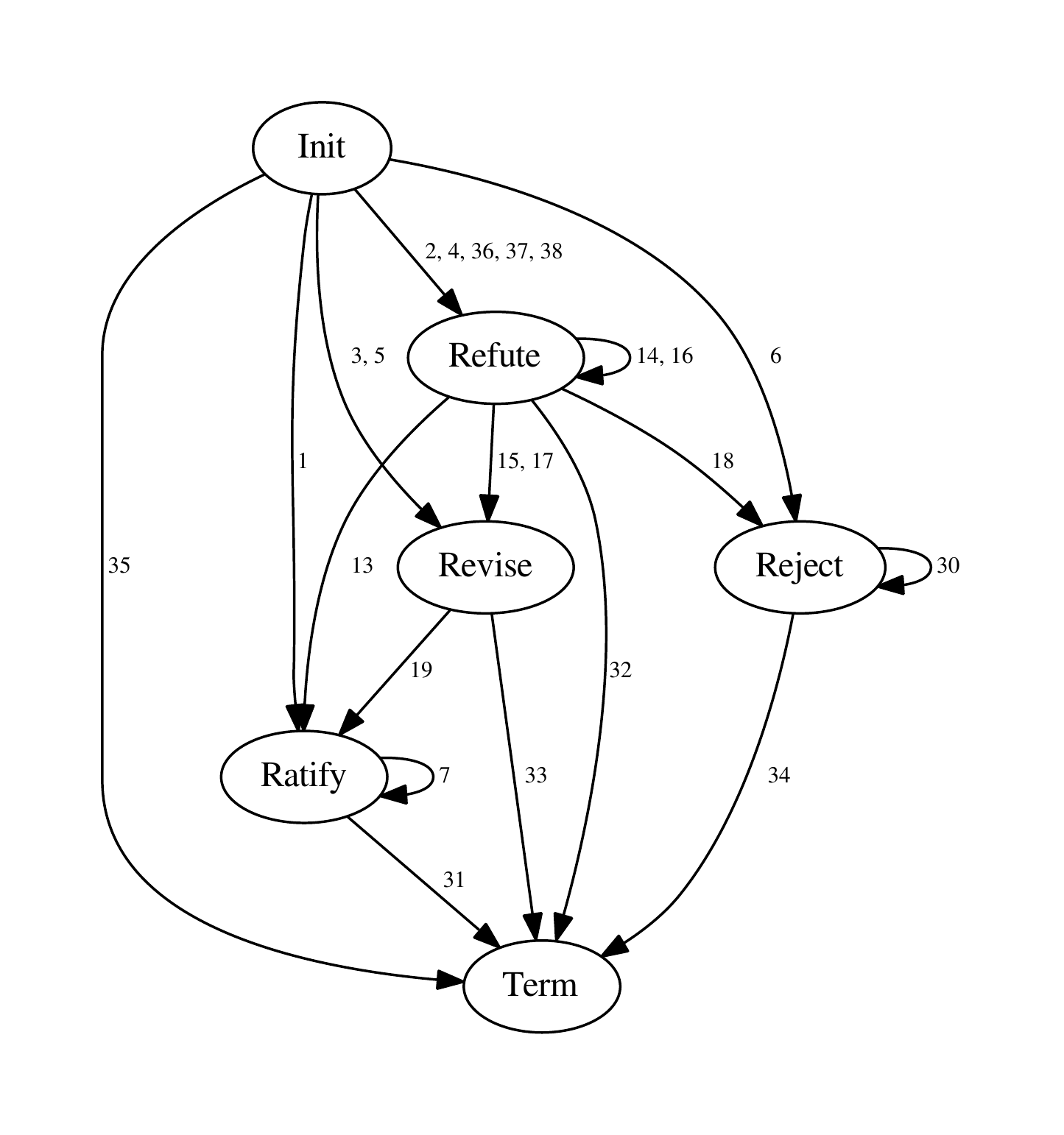}
    \begin{center}
        (b)
    \end{center}
    \end{minipage}
    \hspace*{0.5cm}
    \begin{minipage}{0.40\textwidth}
    \centering
    \vspace*{0.9cm}
    \includegraphics[height=8cm]{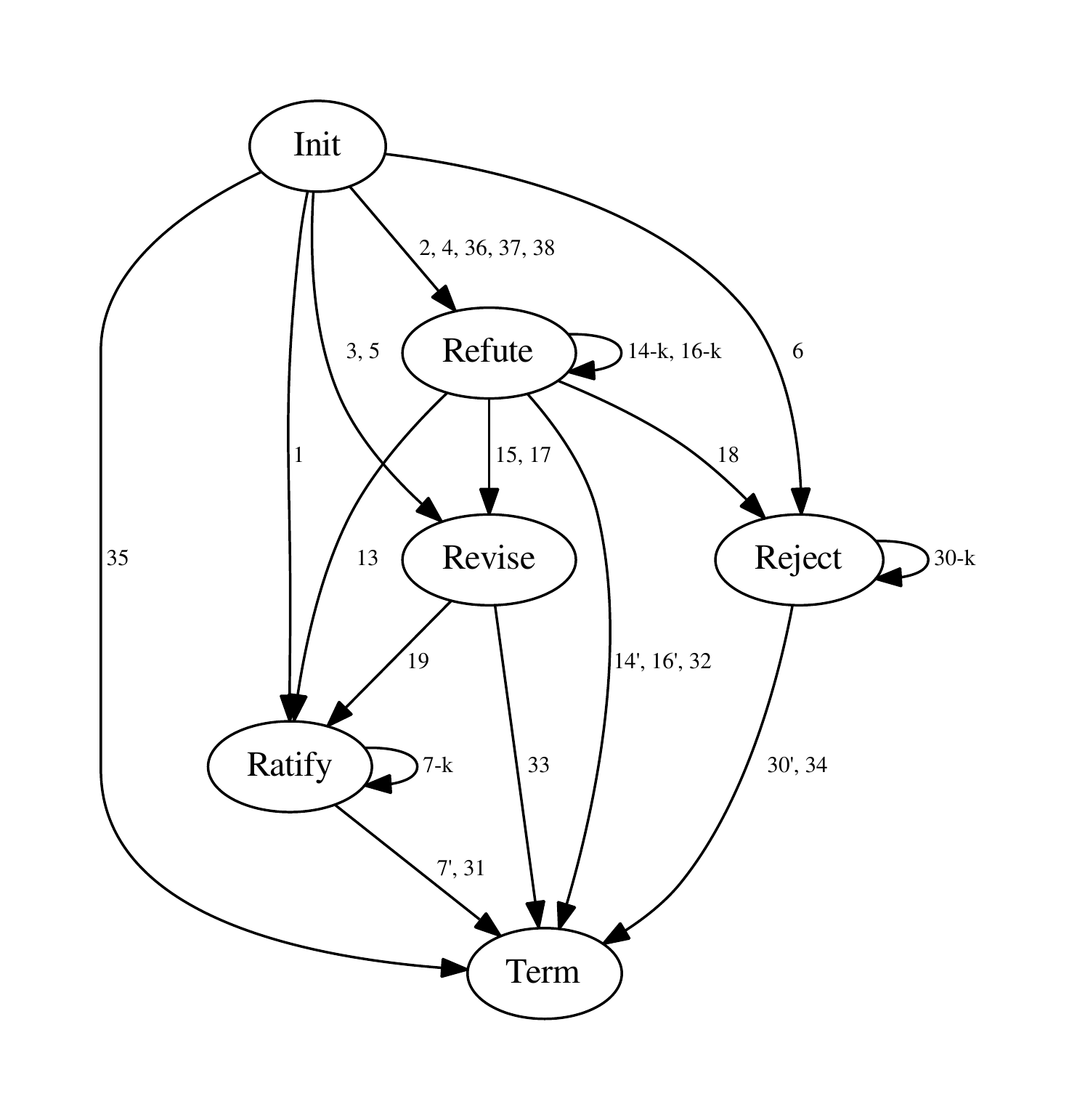}
    \begin{center}
       \hspace*{19pt}
       (c)
    \end{center}
    \vspace*{10pt}
    \end{minipage}
    \caption{
    Message-graph obtained from:
    (a) transitions listed in 
    Table \ref{tab:allgt};
    (b) the transitions in the ${\PXP}$ protocol,
    which is only between compatible agents (transitions mentioned in Proposition \ref{prop:illegal} are excluded) 
    and
    (c) the transitions defined in the ${\PXPk}$ protocol
    (Definition \ref{def:lxpstar}, in which self-loops in ${\PXP}$ are replaced.
    We do not show edges where no message is sent or received.}
        \label{fig:mgraph}
\end{figure}

\subsection{Termination of $\PXPk$}
\label{app:proofterm}

It is straightforward to show that communication between compatible automata using $\PXPk$ is bounded:

\begin{myproposition}[Bounded Communication]
Let Figure \ref{fig:mgraph}(c) represents the message graph of a collaborative session 
using the $\PXPk$ protocol. Then any communication in the session has
bounded length.
\end{myproposition}

\begin{proof}
All messages in a session commence with $Init$ and end with $Term$ message-tags. 
The result follows straightforwardly from the fact that
Figure \ref{fig:mgraph}(c) is a DAG except the self-loops 7-k at Ratify, 14-k and 16-k at Refute, and 30-k at Reject. But each of these loops can occur at most $k$ times.  Therefore any path between $Init$
and $Term$ is bounded. \qed
\end{proof}

\noindent
We note that without the restriction to compatible agents, it is not possible
to guarantee bounded communication (Figure \ref{fig:mgraph}(a), which contains
several cycles).

\section{Completeness of $\PXPk$ wrt to the Dialogue Model {\tt{MMSV}}}
\label{app:madlxp}

The graphical representation of the dialogue model in \citep{madumal} that was shown
Fig.~\ref{fig:madprot}. 

We note that `conversations' arising from execution of the
${\mathtt{MMSV}}$ protocol are paths in the directed-graph in
Figure \ref{fig:madprot}. As an example, Table~\ref{tab:madcompare}(a)
tabulates all paths from Start to End upto length 5 and Table \ref{tab:madcompare}(b)
shows a proposed mapping to message-tag sequences in $\PXPk$.

Two points of difference emerge from Table \ref{tab:madcompare}.
First, there exist message-sequences that are possible
in $\PXPk$ that do not have counterparts in 
${\mathtt{MMSV}}$: an example is
any ${\PXPk}$ sequence containing
$Revise$ for the Explainer. Secondly, there exist paths in ${\mathtt{MMSV}}$
that do not have counterparts in ${\PXPk}$: an
example is any path that contains
 the edge $(1,1)$ in Figure \ref{fig:madprot}.
The former suggests that 
${\mathtt{MMSV}}$, is not intended for use in situations where
either participant in the dialogue can revise its hypothesis
about a data-instance. 
The latter difference arises because questions between $\PEX$ agents
in a session are restricted to the label of the session's instance
and/or the explanation for the label. In such cases,
repeated further question are not meaningful.\footnote{
This is a consequence of assuming that ${\mathtt{MATCH}}$
and ${\mathtt{AGREE}}$ are Boolean functions. If
this assumption does not hold, then an agent $n$ may
not be able to decide whether or not the explanation received from agent $m$ agrees
with the explanation it's own hypothesis derives.
In such a situation the equivalent to further questions
in ${\mathtt{MMSV}}$ become possible. We do not pursue
this further here.} Let us denote ${\mathtt{MMSV}}$ without the edge $(1,1)$ as
${\mathtt{MMSV}}^-$. We note the following:

\begin{table}[htbp]
\begin{center}
{\small{
    \begin{tabular}{|l|l|}\hline
    \multicolumn{1}{|c}{${\mathtt{MMSV}}$ Path} & \multicolumn{1}{|c|}{Path Labels} \\ \hline
    (0,2,8) & [Start]--E:begin-explanation--[Expl. Presented]--end-explanation--[End] \\ \hline
    (0,1,2,8) & [Start]--Q:begin-question--[Ques. Stated]--E:explain/further-explain--\\ 
              & [Expl. Presented]--end-explanation--[End] \\ \hline
    (0,2,3,8) & [Start]--E:begin-explanation--[Expl. Presented]--Q:affirm--\\
              & [Explainee Affirmed]--end-explanation--[End] \\ \hline
    (0,1,2,3,8) & [Start]--Q:begin-question--[Ques. Stated]--E:explain/further-explain--\\
        & [Expl. Presented]--Q:affirm--[Explainee Affirmed]--end-explanation--[End] \\ \hline
    (0,1,1,2,8) & [Start]--Q:begin-question--[Ques. Stated]--E:return-question--\\
        & [Ques. Stated]--E:explain/further-explain--[Expl. Presented]--\\
        & end-explanation--[End] \\ \hline
    (0,2,3,4,8) & [Start]--E:begin-explanation--[Expl. Presented]--Q:affirm--\\
        & [Explainee Affirmed]--E:affirm--[Explainer Affirmed]--end-explanation--[End] \\ \hline
    (0,2,5,6,8) & [Start]--E:begin-explanation--[Expl. Presented]--Q:begin-argument--\\
        & [Arg. Presented]--E:affirm-argument--[Arg. Affirmed]--end-argument--[End] \\ \hline
\end{tabular}
}}
\end{center}
\begin{center}
    (a)
\end{center}
\begin{center}
\begin{tabular}{|l|l|l|}\hline
    \multicolumn{1}{|c}{{$\mathtt{MMSV}$} Path} & \multicolumn{1}{|c|}{${\PXPk}$ Transitions} &\multicolumn{1}{|c|}{${\PXPk}$ Messages} \\ \hline
(0,2,8)& (0,40,39) & $\mathit{Init_E}$,$\mathit{Term_E}$\\ \hline
(0,1,2,8)& (0,36/37/38, 32,39) &$\mathit{Init_Q}$,$\mathit{Refute_E}$,$\mathit{Term_E}$ \\\hline
(0,2,3,8)& (0,1,31,40) & $\mathit{Init_E}$,$\mathit{Ratify_Q}$,$\mathit{Term_E}$\\
& (0,2,32,40) & $\mathit{Init_E}$,$\mathit{Refute_Q}$,$\mathit{Term_E}$\\
& (0,3,33,40) & $\mathit{Init_E}$,$\mathit{Revise_Q}$,$\mathit{Term_E}$\\
\hline
(0,1,2,3,8)& (0,36/37/38,13,39) & $\mathit{Init_Q}$,$\mathit{Refute_E}$,$\mathit{Ratify_Q}$,$\mathit{Term_E}$\\
& (0,36/37/38,15/17,39) & $\mathit{Init_Q}$,$\mathit{Refute_E}$,$\mathit{Revise_Q}$,$\mathit{Term_E}$\\
& (0,36/37/38,14-k/16-k,39) & $\mathit{Init_Q}$,$\mathit{Refute_E}$,$\mathit{Refute_Q}$,$\mathit{Term_E}$\\
\hline
(0,1,1,2,8) & (0,36/37/38,32,39) & $\mathit{Init_Q}$,$\mathit{Refute_E}$,$\mathit{Term_E}$\\
\hline
(0,2,3,4,8) & (0,1/2/3/4/5, & $\mathit{Init_E}$,$\mathit{Ratify_Q/Revise_Q/Refute_Q}$,\\
&7/13/14-k/15/16-k/17/19,39)&  $\mathit{Ratify_E/Revise_E/Refute_E}$,$\mathit{Term_Q}$ \\
\hline
(0,2,5,6,8) & (0,2,13,39) & $\mathit{Init_E}$,$\mathit{Refute_Q}$,$\mathit{Ratify_E}$,$\mathit{Term_Q}$\\
& (0,2,15/17,39) & $\mathit{Init_E}$,$\mathit{Refute_Q}$,$\mathit{Revise_E}$,$\mathit{Term_Q}$\\
& (0,2,14-k/16-k,39) & $\mathit{Init_E}$,$\mathit{Refute_Q}$,$\mathit{Refute_E}$, $\mathit{Term_Q}$\\
\hline
\end{tabular}
\end{center}
\begin{center}
    (b)
\end{center}
    \caption{(a) ${\mathtt{MMSV}}$ paths of up to length 5 from Start to End
        with corresponding node-
        and edge-labels; and (b) Transitions and 
        messages in ${\PXPk}$ corresponding to the ${\mathtt{MMSV}}$ 
        paths in (a). Here the message-tags in the last column with subscripts
        to identify the agent sending the message. We assume $k$ is at least
        2, which allows the ${\PXPk}$ message-sequences
        to contain the 2 consecutive $Refute$ tags for
        the ${\mathtt{MMSV}}$ path $(0,2,3,4,8)$.}
    \label{tab:madcompare}
\end{table}

\begin{myremark}
\begin{itemize}
\item The set of paths in ${\mathtt{MMSV}}^-$ is given by the regular expression
$ 0(1+\epsilon)2(12+312+342+562+5672)^*(8+38+348+568+5678)$.
\item For every path $P$ of  ${\mathtt{MMSV}}^-$, there is a way to decompose path $P$ using
edges in $\{(0,1),(0,2),(1,2)$, $(2,1),(2,3),(2,5),(3,1),(3,4),(5,6),(2,8),
(3,8),(4,8),(6,8),(7,8)\}$ and paths $P_{342},P_{562},P_{567},P_{5672}$ where
$P_{342}$ is a path with edges $(3,4),(4,2)$; $P_{562}$ is a path with edges 
$(5,6),(6,2)$; $P_{567}$ is a path with edges $(5,6),(6,7)$; and $P_{5672}$ is 
a path with edges $(5,6),(6,7),(7,2)$. 
\item Every path of length $l$
in the graph for ${\mathtt{MMSV^-}}$ (Figure \ref{fig:madprot})
corresponds to a path of at most $l$ in the ${\PXP(l)}$ protocol (Figure \ref{fig:mgraph}).
\end{itemize}
\end{myremark}
We will provide a justification for the last point here. We will provide a message tag sequence 
from ${\PXPk}$ protocol for edges in $\{(0,1),(0,2),(1,2),(2,1),
(2,3),(2,5),(3,1),(3,4)$, $(5,6),(2,8)$, $(3,8),(4,8),(6,8),(7,8)\}$ and for paths 
$P_{342},P_{562},P_{567},P_{5672}$. 
\vspace*{-10pt}
\begin{center}
\begin{tabular}{|l|l||l|l|}
\hline
Edge& Message tags & Edge & Message tag\\
\hline
$(0,1)$& $\mathit{Init_Q}$& $(0,2)$ & $\mathit{Init_E}$\\
\hline
$(2,1)$ or $(2,5)$ or & $\mathit{Refute_Q}$& $(1,2)$ & $\mathit{Refute_E}$\\
$(3,1)$&&&\\
\hline
$(2,3)$& $\mathit{Ratify_Q/Revise_Q/Refute_Q}$& $(5,6)$ or $(3,4)$ & $\mathit{Ratify_E/Revise_E/Refute_E}$\\
\hline
$(2,8)$ or $(3,8)$ or & $\mathit{Term_Q/Term_E}$ & $P_{342}$ or $P_{562}$ or  & $\mathit{Refute_E}$\\
$(4,8)$ or $(6,8)$ or &&$P_{567}$ or $P_{5672}$ & \\
$(7,8)$&&&\\
\hline
\end{tabular}
\end{center}
Now for any path $P$ in ${\mathtt{MMSV}}^-$, there is a message sequence in 
${\PXP(l)}$ protocol.

\bibliography{main}
\end{document}